\newtheorem{theorem}{Theorem}[section]
\newtheorem{proposition}{Proposition}[section]
\newtheorem{corollary}{Corollary}[section]
\theoremstyle{remark}
\def\Z{\mathbb{Z}}
\def\R{\mathbb{R}}
\def\1{\mathds{1}}
\def\F{\mathcal{F}}
\def\diag{\mathrm{diag}}
\def\aff{\mathrm{Aff}}
\def\det{\mathrm{det}}
\def\neu{\mathcal{N}}
\DeclareMathOperator*{\supp}{supp}
\def\lin{\mathrm{span}}
\def\NR{\mbox{$\mathrm{N\!R}$}\!}
\def\erf{\mathrm{erf}}
\def\relu{\mathrm{ReLU}}
\def\rbf{\mathrm{RBF}}
\newcommand\il[1]{\langle #1 \rangle}
\title{Processing of missing data by neural networks}
\author{
  Marek \'Smieja\\
  \texttt{marek.smieja@uj.edu.pl} \\
  \And
  \L{}ukasz Struski\\
  \texttt{lukasz.struski@uj.edu.pl} \\
  \And
  Jacek Tabor\\
  \texttt{jacek.tabor@uj.edu.pl} \\
  \And
  Bartosz Zieli\'nski\\
  \texttt{bartosz.zielinski@uj.edu.pl} \\
  \And
  Przemys\l{}aw Spurek\\
  \texttt{przemyslaw.spurek@uj.edu.pl} \\
}
\begin{document}

\maketitle

\vspace{-0.7cm}
\begin{center}
Faculty of Mathematics and Computer Science\\
Jagiellonian University\\
\L{}ojasiewicza 6, 30-348 Krak\'ow, Poland \\
\end{center}

\begin{abstract}
We propose a general, theoretically justified mechanism for processing missing data by neural networks. Our idea is to replace typical neuron's response in the first hidden layer by its expected value. This approach can be applied for various types of networks at minimal cost in their modification. Moreover, in contrast to recent approaches, it does not require complete data for training. Experimental results performed on different types of architectures show that our method gives better results than typical imputation strategies and other methods dedicated for incomplete data.
\end{abstract}

\section{Introduction} \label{se:introduction}

Learning from incomplete data has been recognized as one of the fundamental challenges in machine learning \cite{goodfellow2016deep}. Due to the great interest in deep learning in the last decade, it is especially important to establish unified tools for practitioners to process missing data with arbitrary neural networks.

In this paper, we introduce a general, theoretically justified methodology for feeding neural networks with missing data. Our idea is to model the uncertainty on missing attributes by probability density functions, which eliminates the need of direct completion (imputation) by single values. In consequence, every missing data point is identified with parametric density, e.g. GMM, which is trained together with remaining network parameters. To process this probabilistic representation by neural network, we generalize the neuron's response at the first hidden layer by taking its expected value (Section \ref{sec:layer}). This strategy can be understand as calculating the average neuron's activation over the imputations drawn from missing data density (see Figure \ref{fig:idea} for the illustration).

The main advantage of the proposed approach is the ability to train neural network on data sets containing only incomplete samples (without a single fully observable data). This distinguishes our approach from recent models like context encoder \cite{pathak2016context, yang2017high}, denoising autoencoder \cite{xie2012image} or modified generative adversarial network \cite{yeh2017semantic}, which require complete data as an output of the network in training. Moreover, our approach can be applied to various types of neural networks what requires only minimal modification in their architectures. Our main theoretical result shows that this generalization does not lead to loss of information when processing the input (Section \ref{sec:theory}). Experimental results performed on several types of networks demonstrate practical usefulness of the method (see Section \ref{sec:experiments} and Figure \ref{fig:mnist} for sample results) .

\begin{figure}[t]
\centering
\scalebox{0.7}{
\large
\tikzset{%
  every neuron/.style={
    circle,
    draw,
    fill=blue!50,
    minimum size=1cm,
  },
  input neuron/.style={
    circle,
    draw,
    fill=blue!50,
    minimum size=1cm
  },
  neuron missing/.style={
    draw=none, 
    scale=4,
    text height=0.333cm,
    opacity=.0,
    execute at begin node=\color{black}$\vdots$
  },
}

\begin{tikzpicture}[x=1.4cm, y=1.4cm, >=stealth]

\foreach \m/\l [count=\y] in {1,2,3,4,5,6,7}
  \node [input neuron/.try, neuron \m/.try] (input-\m) at (0,2.5-\y) {};

\foreach \m [count=\y] in {1}
  \node [draw, align=center, fill=blue!50, minimum size=2cm] (hidden1-\m) at (8,-0.5-\y) {$\int \phi(w^Tx+b) F_S(x) dx$};
%


\foreach \l [count=\i] in {1,2,3,4,5,6,7}
  \draw [<-] (input-\i) -- ++(-1,0)
    node[yshift=-0.0cm, xshift=1.4cm] {\ifthenelse{\i=2 \OR \i=4}{$\star$}{ $x_{\i}$ }};
    



\foreach \i in {1,...,7}
  \foreach \j in {1}
    \draw [->] (input-\i) -- (hidden1-\j) node [above, very near start] {$w_\i$};



\node (AA) at (6,-0.78) {};
\node (BB) at (6,-1.22) {};

\node (A) at (3,-1.78) {};
\node (B) at (3,-3.5) {};

\node[align=center, above] (whitehead) at (4.15,-3.6)
 {
 GMM params: $(p_i, m_i,\Sigma_i)_i$ \\
 \includegraphics[width=.4\textwidth]{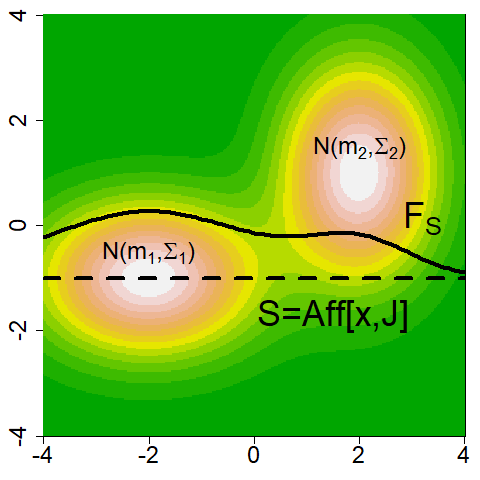}
 };


\node [align=center, above] at (0,2) {INPUT};
\node [align=center, above] at (8,2) {OUTPUT};


\node (AAA) at (0.5,-4.35) {};
\node (BBB) at (0.5,1.35) {};

\node (AAAA) at (0.4,-1.6) {};
\node (BBBB) at (2.01,-1.6) {};

\node (CC) at (3.99,-1.6) {};
\node (DD) at (5.75,-1.6) {};

\end{tikzpicture}
}
\caption{Missing data point $(x,J)$, where $x \in \R^D$ and $J \subset \{1,\ldots,D\}$ denotes absent attributes, is represented as a conditional density $F_S$ (data density restricted to the affine subspace $S=\aff[x,J]$ identified with $(x,J)$). Instead of calculating the activation function $\phi$ on a single data point (as for complete data points), the first hidden layer computes the expected activation of neurons. Parameters of missing data density $(p_i,\mu_i,\Sigma_i)_i$ are tuned jointly with remaining network parameters.}
\label{fig:idea}
\end{figure}



\section{Related work}
\label{sec:related}

Typical strategy for using machine learning methods with incomplete inputs relies on filling absent attributes based on observable ones \cite{mcknight2007missing}, e.g. mean or k-NN imputation. One can also train separate models, e.g. neural networks \cite{sharpe1995dealing}, extreme learning machines (ELM) \cite{sovilj2016extreme}, $k$-nearest neighbors \cite{batista2002study}, etc., for predicting the unobserved features. Iterative filling of missing attributes is one of the most popular technique in this class \cite{buuren2011micemice, azur2011multiple}. Recently, a modified generative adversarial net (GAN) was adapted to fill in absent attributes with realistic values \cite{yoon2018gain}. A supervised imputation, which learns a replacement value for each missing attribute jointly with remaining network parameters, was proposed in \cite{gupta2016monotonic}.

Instead of generating candidates for filling missing attributes, one can build a probabilistic model of incomplete data (under certain assumptions on missing mechanism) \cite{ghahramani1994supervised, tresp1994training}, which is subsequently fed into particular learning model \cite{struski2016generalized, williams2005incomplete, smola2005kernel, williams2005analytical, shivaswamy2006second, mesquita2016extreme, liao2007quadratically, dick2008learning}. 
Decision function can also be learned based on the visible inputs alone \cite{dekel2010learning, globerson2006nightmare}, see \cite{chechik2008max, xia2017adjusted} for SVM and random forest cases. Pelckmans et. al. \cite{pelckmans2005handling} modeled the expected risk under the uncertainty of the predicted outputs. The authors of \cite{hazan2015classification} designed an algorithm for kernel classification under low-rank assumption, while Goldberg et. al. \cite{goldberg2010transduction} used matrix completion strategy to solve missing data problem.

The paper \cite{bengio1996recurrent} used recurrent neural networks with feedback into the input units, which fills absent attributes for the sole purpose of minimizing a learning criterion. By applying the rough set theory, the authors of \cite{nowicki2016novel} presented a feedforward neural network which gives an imprecise answer as the result of input data imperfection. Goodfellow et. al. \cite{goodfellow2013multi} introduced the multi-prediction deep Boltzmann machine, which is capable of solving different inference problems, including classification with missing inputs.


Alternatively, missing data can be processed using the popular context encoder (CE) \cite{pathak2016context, yang2017high} or modified GAN \cite{yeh2017semantic}, which were proposed for filling missing regions in natural images. The other possibility would be to use denoising autoencoder \cite{xie2012image}, which was used e.g. for removing complex patterns like superimposed text from an image. Both approaches, however, require complete data as an output of the network in training phase, which is in contradiction with many real data sets (such us medical ones).


\section{Layer for processing missing data}
\label{sec:layer}


In this section, we present our methodology for feeding neural networks with missing data. We show how to represent incomplete data by probability density functions and how to generalize neuron's activation function to process them.


\textbf{Missing data representation.} A missing data point is denoted by $(x,J)$, where $x \in \R^D$ and $J \subset \{1,\ldots,D\}$ is a set of attributes with missing values. With each missing point $(x,J)$ we associate the affine subspace consisting of all points which coincide with $x$ on known coordinates $J'=\{1,\ldots,N\} \setminus J$:
$$
S = \aff[x,J] = x + \lin (e_J), 
$$
where $e_J=[e_j]_{j \in J}$ and $e_j$ is $j$-th canonical vector in $\R^D$. 

Let us assume that the values at missing attributes come from the unknown $D$-dimensional probability distribution $F$. Then we can model the unobserved values of $(x,J)$ by restricting $F$ to the affine subspace $S = \aff[x,J]$. In consequence, possible values of incomplete data point $(x,J)$ are described by a conditional density\footnote{More precisely, $F_S$ equals a density $F$ conditioned on the observed attributes.} $F_S: S \to \R$ given by (see Figure \ref{fig:idea}):
\begin{equation} \label{eq:condDen}
F_S(x)=
\begin{cases}
\frac{1}{\int_S F(s) ds}F(x) \mbox{, for }x \in S,\\
0 \mbox{, otherwise.}
\end{cases}
\end{equation}
Notice that $F_S$ is a degenerate density defined on the whole $\R^D$ space\footnote{An example of degenerate density is a degenerate Gaussian $N(m,\Sigma)$, for which $\Sigma$ is not invertible. A degenerate Gaussian is defined on affine subspace (given by image of $\Sigma$), see \cite{rao1973linear} for details. For simplicity we use the same notation $N(m,\Sigma)$ to denote both standard and degenerate Gaussians.}, which allows to interpret it as a probabilistic representation of missing data point $(x,J)$. 


In our approach, we use the mixture of Gaussians (GMM) with diagonal covariance matrices as a missing data density $F$. The choice of diagonal covariance reduces the number of model parameters, which is crucial in high dimensional problems. Clearly, a conditional density for the mixture of Gaussians is a (degenerate) mixture of Gaussians with a support in the subspace. Moreover, we apply an additional regularization in the calculation of conditional density \eqref{eq:condDen} to avoid some artifacts when Gaussian densities are used\footnote{One can show that the conditional density of a missing point $(x,J)$ sufficiently distant from the data reduces to only one Gaussian, which center is nearest in the Mahalanobis distance to $\aff[x,J]$}. This regularization allows to move from typical conditional density given by \eqref{eq:condDen} to marginal density in the limiting case. Precise formulas for a regularized density for GMM with detailed explanations are presented in Supplementary Materials (section 1).

\textbf{Generalized neuron's response.} To process probability density functions (representing missing data points) by neural networks, we generalize the neuron's activation function. For a probability density function $F_S$, we define the generalized response (activation) of a neuron $n: \R^D \to \R$ on $F_S$ as the mean output:
$$
n(F_S) = E[n(x) | x \sim F_S]  =\int n(x) F_S(x) dx.
$$

Observe that it is sufficient to generalize neuron's response at the first layer only, while the rest of network architecture can remain unchanged. Basic requirement is the ability of computing expected value with respect to $F_S$. We demonstrate that the generalized response of ReLU and RBF neurons with respect to the mixture of diagonal Gaussians can be calculated efficiently.
 
Let us recall that the ReLU neuron is given by
$$
\relu_{w,b}(x)=\max(w^Tx+b,0),
$$
where $w \in \R^D$ and $b \in \R$ is the bias. Given 1-dimensional Gaussian density $N(m,\sigma^2)$, we first evaluate $\relu[N(m,\sigma^2)]$, where $\relu = \max(0,x)$. If we define an auxiliary function:
$$
\NR(w) = \relu[N(w,1)],
$$
then the generalized response equals:
$$
\relu[N(m,\sigma^2)] = \sigma \NR(\frac{m}{\sigma}).
$$
Elementary calculation gives:
\begin{equation}\label{eq:nr}
\NR(w) = \frac{1}{\sqrt{2 \pi}} \exp(-\frac{w^2}{2}) + \frac{w}{2} (1+\erf(\frac{w}{\sqrt{2}})),
\end{equation}
where $\erf(z) = \frac{2}{\sqrt{pi}} \int_0^z \exp(-t^2)dt$.

We proceed with a general case, where an input data point $x$ is generated from the mixture of (degenerate) Gaussians. The following observation shows how to calculate the generalized response of $\relu_{w,b}(x)$, where $w \in \R^D, b \in \R$ are neuron weights.

\begin{theorem}
Let $F = \sum_i p_i N(m_i,\Sigma_i)$ be the mixture of (possibly degenerate) Gaussians. Given weights $w=(w_1,\ldots,w_D) \in \R^D, b\in \R$, we have:
$$
\relu_{w,b}(F) =\sum_i p_i \sqrt{w^T \Sigma_i w} \NR\big( \frac{w^Tm_i+b}{\sqrt{w^T \Sigma_i w}}
\big).
$$
\end{theorem}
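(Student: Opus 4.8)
The plan is to collapse the $D$-dimensional integral defining the generalized response into a one-dimensional one, for which formula \eqref{eq:nr} and the scaling identity $\relu[N(m,\sigma^2)]=\sigma\,\NR(m/\sigma)$ have already been established. First I would use linearity of the integral: since $F=\sum_i p_i N(m_i,\Sigma_i)$,
$$
\relu_{w,b}(F)=\int \max(w^Tx+b,0)\,F(x)\,dx=\sum_i p_i\int \max(w^Tx+b,0)\,N(m_i,\Sigma_i)(x)\,dx,
$$
so it suffices to evaluate the generalized response $\relu_{w,b}(N(m,\Sigma))$ for a single (possibly degenerate) Gaussian component and then recombine with the weights $p_i$.

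The key step is to notice that the argument $w^Tx+b$ inside the $\max$ depends on $x$ only through the scalar affine functional $x\mapsto w^Tx+b$. If $X\sim N(m,\Sigma)$, then $Y:=w^TX+b$ is a one-dimensional Gaussian with mean $w^Tm+b$ and variance $w^T\Sigma w$; equivalently, the pushforward of $N(m,\Sigma)$ under $x\mapsto w^Tx+b$ is $N(w^Tm+b,\,w^T\Sigma w)$. Granting this,
$$
\int \max(w^Tx+b,0)\,N(m,\Sigma)(x)\,dx=E[\max(Y,0)]=\relu\big[N(w^Tm+b,\,w^T\Sigma w)\big],
$$
which is exactly a one-dimensional generalized ReLU response. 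It then remains to apply the scaling identity with $\mu=w^Tm+b$ and $\sigma=\sqrt{w^T\Sigma w}$, giving $\relu[N(\mu,\sigma^2)]=\sigma\,\NR(\mu/\sigma)=\sqrt{w^T\Sigma w}\,\NR\!\big((w^Tm+b)/\sqrt{w^T\Sigma w}\big)$; summing over $i$ with weights $p_i$ yields the claimed expression.

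The main obstacle, and the only place where care is genuinely needed, is the pushforward claim in the degenerate case. For nondegenerate $\Sigma$ this is the standard fact that affine images of Gaussians are Gaussian, checked directly via characteristic functions or by diagonalizing $\Sigma$. For degenerate $\Sigma$ one invokes the definition of a degenerate Gaussian as a measure supported on the affine subspace $m+\lin(\mathrm{im}\,\Sigma)$ (cf. the footnote and \cite{rao1973linear}); the computation then goes through verbatim at the level of characteristic functions, since $E[e^{itY}]=\exp\big(it(w^Tm+b)-\tfrac12 t^2 w^T\Sigma w\big)$, which is precisely the characteristic function of $N(w^Tm+b,\,w^T\Sigma w)$. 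A second point worth flagging is the boundary case $w^T\Sigma w=0$, occurring when $w$ is orthogonal to the range of $\Sigma$: there $Y$ degenerates to the deterministic constant $w^Tm+b$, and the stated formula must be read as the limit $\sigma\to 0^+$ of $\sigma\,\NR(\mu/\sigma)$. By the asymptotics of $\NR$ (namely $\NR(w)\to w$ as $w\to+\infty$ and $\NR(w)\to 0$ as $w\to-\infty$) this limit equals $\max(\mu,0)$, so the expression remains valid by continuity and correctly reproduces the deterministic response $\relu_{w,b}(m)$.
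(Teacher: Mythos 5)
Your proof is correct and follows essentially the same route as the paper's: reduce by linearity to a single mixture component, push the Gaussian forward through the affine map $x\mapsto w^Tx+b$ to obtain the one-dimensional law $N(w^Tm+b,\,w^T\Sigma w)$, and then apply the scaling identity $\relu[N(\mu,\sigma^2)]=\sigma\,\NR(\mu/\sigma)$ together with \eqref{eq:nr}. Your extra care about the degenerate pushforward and the boundary case $w^T\Sigma w=0$ goes beyond what the paper records (it simply asserts the pushforward and evaluates the resulting one-dimensional integral), but it does not change the argument.
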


\begin{proof}
If $x \sim N(m,\Sigma)$ then $w^T x + b \sim N(w^Tx+b, w^T \Sigma w)$. Consequently, if $x \sim \sum_i p_i N(m_i,\Sigma_i)$, then $w^T x + b \sim \sum_i p_i N(w^T m_i +b, w^T \Sigma_i w)$.

Making use of \eqref{eq:nr}, we get:
\begin{multline*}
\relu_{w,b}(F) = \int_\R \relu(x) \sum_i p_i N(w^Tm_i+b,w^T \Sigma_i w)(x) dx\\
=\sum_i p_i \int_0^{\infty} x N(w^Tm_i+b,w^T \Sigma_i w)(x) dx =\sum_i p_i \sqrt{w^T \Sigma_i w} \NR\big(\frac{w^Tm_i+b}{\sqrt{w^T \Sigma_i w}}\big).
\end{multline*}
\end{proof}

We show the formula for a generalized RBF neuron's activation. Let us recall that RBF function is given by $\rbf_{c,\Gamma}(x) = N(c,\Gamma)(x) $. 
\begin{theorem}
Let $F = \sum_i p_i N(m_i,\Sigma_i)$ be the mixture of (possibly degenerate) Gaussians and let RBF  unit be parametrized by $N(c,\Gamma)$. We have:
$$
\rbf_{c,\Gamma}(F)  = \sum_{i=1}^k p_i N(m_i-c, \Gamma+\Sigma_i)(0).
$$
\end{theorem}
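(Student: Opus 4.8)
The plan is to reduce the statement to a single elementary fact about Gaussians: that the integral of a product of two Gaussian densities is again a Gaussian evaluation. First I would unfold the definition of the generalized response. Since $\rbf_{c,\Gamma}(x) = N(c,\Gamma)(x)$, by definition
$$
\rbf_{c,\Gamma}(F) = \int_{\R^D} N(c,\Gamma)(x)\, F(x)\, dx,
$$
and substituting $F = \sum_i p_i N(m_i,\Sigma_i)$ together with linearity of the integral reduces everything to computing, for each mixture component, the single integral
$$
I_i = \int_{\R^D} N(c,\Gamma)(x)\, N(m_i,\Sigma_i)(x)\, dx.
$$
The whole theorem then amounts to proving $I_i = N(m_i-c,\,\Gamma+\Sigma_i)(0)$ and summing against the weights $p_i$.

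The core step is the Gaussian product identity: for positive-definite covariances $A,B$,
$$
\int_{\R^D} N(a,A)(x)\, N(b,B)(x)\, dx = N(a-b,\,A+B)(0).
$$
I would prove this by writing out both densities explicitly and combining the two quadratic exponents into a single quadratic form in $x$. Completing the square, the exponent splits as $-\tfrac12 (x-\mu^*)^T (\Sigma^*)^{-1}(x-\mu^*)$ centered at some $\mu^*$ with covariance $\Sigma^* = (A^{-1}+B^{-1})^{-1}$, plus a remainder independent of $x$. Once supplied with the correct normalizing constant, the $x$-dependent part is exactly the density $N(\mu^*,\Sigma^*)$, which integrates to one; what survives is the constant remainder. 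The matrix identity $A^{-1}-A^{-1}(A^{-1}+B^{-1})^{-1}A^{-1} = (A+B)^{-1}$ (Woodbury) simplifies the residual quadratic form to one governed by $(A+B)^{-1}$, while $\det(A)\det(B)\det(A^{-1}+B^{-1}) = \det(A+B)$ collapses the normalizing constants, so the leftover term is precisely $N(a-b,A+B)(0)$. Applying this with $a=c$, $A=\Gamma$, $b=m_i$, $B=\Sigma_i$, and using that a zero-mean Gaussian is even in its argument (so $N(c-m_i,\cdot)(0) = N(m_i-c,\cdot)(0)$), yields $I_i = N(m_i-c,\Gamma+\Sigma_i)(0)$.

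The main obstacle is the degenerate case, where some $\Sigma_i$ is singular, so $N(m_i,\Sigma_i)$ is supported on a lower-dimensional affine subspace and $I_i$ must be read as an integral against that measure rather than against Lebesgue measure on $\R^D$. I would handle this by a continuity argument: replace $\Sigma_i$ by $\Sigma_i + \varepsilon I$, apply the non-degenerate identity just established, and let $\varepsilon \to 0^+$. The right-hand side $N(m_i-c,\,\Gamma+\Sigma_i+\varepsilon I)(0)$ is continuous in $\varepsilon$, where it is essential that $\Gamma$ is positive definite so that $\Gamma+\Sigma_i$ remains invertible in the limit and the expression stays well defined. On the left, the perturbed measures converge weakly to the degenerate Gaussian, and since the integrand $N(c,\Gamma)(\cdot)$ is bounded and continuous, the integrals pass to the limit. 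Summing the resulting equalities $I_i = N(m_i-c,\Gamma+\Sigma_i)(0)$ against the weights $p_i$ gives the claimed formula.
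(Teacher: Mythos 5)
Your proof takes essentially the same route as the paper's: unfold the definition of the generalized response, use linearity over the mixture components, and invoke the Gaussian inner-product identity $\int_{\R^D} N(a,A)(x)\,N(b,B)(x)\,dx = N(a-b,\,A+B)(0)$. The only difference is that you actually prove that identity (completing the square, the Woodbury and determinant lemmas) and treat the degenerate components via an $\varepsilon$-perturbation and weak-convergence limit, both of which the paper simply takes as known; your argument is correct.
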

\begin{proof}
We have:
\begin{multline} \label{eq:expectation}
\rbf_{c,\Gamma}(F)  = \int_{\R^D} \rbf_{c,\Gamma}(x) F(x) dx
  = \sum_{i=1}^k p_i \int_{\R^D} N(c, \Gamma) (x) N(m^i,\Sigma^i)(x) dx\\[0.8ex]
  = \sum_{i=1}^k p_i \il{N(c, \Gamma), N(m^i,\Sigma^i)}
  =\sum_{i=1}^k p_i N(m_i-c, \Gamma+\Sigma^i)(0).
\end{multline}
\end{proof}

\textbf{Network architecture.} Adaptation of a given neural network to incomplete data relies on the following steps:

\vspace{-0.3cm}
\begin{enumerate}
\item {\em Estimation of missing data density with the use of mixture of diagonal Gaussians}. If data satisfy missing at random assumption (MAR), then we can adapt EM algorithm to estimate incomplete data density with the use of GMM. In more general case, we can let the network to learn optimal parameters of GMM with respect to its cost function\footnote{If huge amount of complete data is available during training, one should use variants of EM algorithm to estimate data density. It could be either used directly as a missing data density or tuned by neural networks with small amount of missing data.}. The later case was examined in the experiment.
\vspace{-0.1cm}
\item 
{\em Generalization of neuron's response.} A missing data point $(x,J)$ is interpreted as the mixture of degenerate Gaussians $F_S$ on $S=\aff[x,J]$. Thus we need to generalize the activation functions of all neurons in the first hidden layer of the network to process probability measures. In consequence, the response of $n(\cdot)$ on $(x,J)$ is given by $n(F_S)$.
\end{enumerate}
\vspace{-0.1cm}

The rest of the architecture does not change, i.e. the modification is only required on the first hidden layer.

Observe that our generalized network can also process classical points, which do not contain any missing values. In this case, generalized neurons reduce to classical ones, because missing data density $F$ is only used to estimate possible values at absent attributes. If all attributes are complete then this density is simply not used. In consequence, if we want to use missing data in testing stage, we need to feed the network with incomplete data in training to fit accurate density model.

\section{Theoretical analysis}
\label{sec:theory}

There appears a natural question: how much information we lose using generalized neuron's activation at the first layer? Our main theoretical result shows that our approach does not lead to the lose of information, which justifies our reasoning from a theoretical perspective. For a transparency, we will work with general probability measures instead of density functions. The generalized response of neuron $n:\R^D \to \R$ evaluated on a probability measure $\mu$ is given by:
$$
n(\mu):=\int n(x) d\mu(x).
$$

The following theorem shows that a neural network with generalized ReLU units is able to identify any two probability measures. The proof is a natural modification of the respective standard proofs of Universal Approximation Property (UAP), and therefore we present only its sketch. Observe that all generalized ReLU return finite values iff a probability measure $\mu$ satisfies the condition
\begin{equation} \label{eq:p}
\int \|x\| d\mu(x) < \infty.
\end{equation}
That is the reason why we reduce to such measures in the following theorem.

\begin{theorem} \label{thm:main}
Let $\mu,\nu$ be probabilistic measures satisfying condition \eqref{eq:p}.
If
\begin{equation} \label{eq:2}
\relu_{w,b}(\mu)=\relu_{w,b}(\nu) \mbox{ for }w \in \R^D,b \in \R,
\end{equation}
then $\nu=\mu$.
\end{theorem}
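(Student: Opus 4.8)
The plan is to reduce the $D$-dimensional identification problem to a family of one-dimensional problems, one for each direction $w$, and then glue the projections back together with the Cram\'er--Wold device. First I would fix a direction $w \in \R^D$ and let $\mu_w$ denote the pushforward of $\mu$ under the linear map $x \mapsto w^T x$; note that condition \eqref{eq:p} guarantees $\int |s| \, d\mu_w(s) = \int |w^T x| \, d\mu(x) < \infty$, so all the integrals below are finite. Writing $h_\mu(b) := \relu_{w,b}(\mu) = \int_\R \max(s+b,0) \, d\mu_w(s)$, the hypothesis \eqref{eq:2} says precisely that $h_\mu(b) = h_\nu(b)$ for every $b \in \R$ (with $w$ held fixed).

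The key step is to show that the single-variable function $h_\mu$ determines the projected measure $\mu_w$. Each integrand $b \mapsto \max(s+b,0)$ is convex and nondecreasing, so $h_\mu$ is convex; differentiating under the integral sign (justified by dominated convergence, since the difference quotients are uniformly bounded by $1$) gives, at every point of differentiability,
\[
h_\mu'(b) = \int_\R \1_{\{s + b > 0\}} \, d\mu_w(s) = \mu_w\big((-b,\infty)\big).
\]
Thus the derivative of $h_\mu$ recovers the (reflected) tail distribution function of $\mu_w$ at all but countably many points, so $h_\mu$ determines $\mu_w$ uniquely. Applying this to both measures, the equality $h_\mu = h_\nu$ forces $\mu_w = \nu_w$.

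Since $w$ was arbitrary, every one-dimensional projection of $\mu$ agrees with the corresponding projection of $\nu$. By the Cram\'er--Wold theorem this implies $\mu = \nu$; equivalently, evaluating the characteristic function of each projection at $1$ yields $\hat\mu(w) = \widehat{\mu_w}(1) = \widehat{\nu_w}(1) = \hat\nu(w)$ for all $w \in \R^D$, and uniqueness of the Fourier transform finishes the argument.

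I expect the main obstacle to be the rigorous handling of the differentiation step together with possible atoms of $\mu_w$: the function $h_\mu$ need not be differentiable where $\mu_w$ carries mass, so one must argue either through the left and right derivatives of the convex function $h_\mu$ (which recover $\mu_w\big((-b,\infty)\big)$ and $\mu_w\big([-b,\infty)\big)$ respectively, the jump being exactly the atom at $-b$), or through second differences, whose limiting measure is $\mu_w$ itself. Everything else --- finiteness from \eqref{eq:p} and the appeal to Cram\'er--Wold --- is routine.
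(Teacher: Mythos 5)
Your proof is correct, but it takes a genuinely different route from the paper's in the key one-dimensional step. Both arguments share the same outer structure: fix a direction $w$, show that the hypothesis forces the pushforward measures $\mu_w$ and $\nu_w$ under $x \mapsto w^Tx$ to coincide, and then glue the projections together (the paper does this by deducing $\int e^{iw^Tx}d\mu = \int e^{iw^Tx}d\nu$ and invoking uniqueness of characteristic functions, which is exactly the Cram\'er--Wold device you cite). Where you differ is in how the projected measure is recovered from the ReLU responses. The paper works \emph{forward} from the class of functions known to integrate equally: it assembles shifted ReLUs $M_p(r)=\max(0,r-p)$ into tent functions, tent functions into compactly supported piecewise linear functions, and passes to all bounded continuous functions (hence $\cos$ and $\sin$) by dominated convergence, so that the family $\F_w$ eventually contains the complex exponentials. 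You instead differentiate the convex function $b \mapsto \relu_{w,b}(\mu)$ in the bias variable and read off the tail distribution function $\mu_w((-b,\infty))$ directly from its one-sided derivatives; your computation of the left and right derivatives (open versus closed half-line, with the jump detecting an atom at $-b$) is accurate, and the monotone-convergence/dominated-convergence justification with difference quotients bounded by $1$ is exactly what is needed under condition \eqref{eq:p}. Your route is more elementary and arguably sharper --- it shows the bias sweep alone at a single $w$ pins down the whole projected distribution, with no approximation theory --- while the paper's route has the advantage of exhibiting explicitly which test functions lie in $\F_w$, a structure it reuses for the RBF and UAP variants in the supplementary material. Either way the conclusion stands; just make sure, when you write it up, to state Cram\'er--Wold (or the characteristic-function identity) as the final step rather than leaving it implicit, since that is the only place the multivariate structure enters.
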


\begin{proof}
Let us fix an arbitrary $w \in \R^D$ and define the set
$$
\F_w=\big\{p:\R \to \R:\int p(w^T x) d\mu(x)=\int p(w^T x) d\nu(x)
\big\}.
$$
Our main step in the proof lies in showing that $\F_w$ contains all continuous bounded functions.

Let $r_i \in \R$ such that $-\infty=r_0<r_1< \ldots <r_{l-1}<r_l=\infty$ and  $q_i \in \R$ such that $q_0=q_1=0=q_{l-1}=q_l$, be given. Let
$Q:\R \to \R$ be a piecewise linear continuous function which is affine linear on intervals $[r_i,r_{i+1}]$ and such that $Q(r_i)=q_i$.
We show that $Q \in \F_w$.
Since 
$$
Q=\sum_{i=1}^{l-1} q_i \cdot T_{r_{i-1},r_i,r_{i+1}},
$$ 
where the tent-like piecewise linear function $T$ is defined by
$$
T_{p_0,p_1,p_2}(r)=
\begin{cases}
0 \mbox{ for } r \leq p_0, \\
\frac{r-p_0}{p_1-p_0} \mbox{ for } r \in [p_0,p_1], \\
\frac{p_2-r}{p_2-p_1} \mbox{ for } r \in [p_1,p_2], \\
0 \mbox{ for } r \geq p_2,
\end{cases}
$$
it is sufficient to prove that $T \in \F_w$. Let $M_p(r)=\max(0,r-p)$. Clearly
$$
T_{p_0,p_1,p_2}=\frac{1}{p_1-p_0} \cdot (M_{p_0}-M_{p_1})-\frac{1}{p_2-p_1}
\cdot (M_{p_2}-M_{p_1}).
$$
However, directly from \eqref{eq:2} we see that $M_p \in \F_w$ for every $p$, and consequently $T$ and $Q$ are also in $\F_w$.

Now let us fix an arbitrary bounded continuous function $G$. We show that $G \in \F_w$. To observe this, take an arbitrary uniformly bounded sequence of piecewise linear functions described before which is convergent pointwise to $G$. By the Lebesgue dominated
convergence theorem we obtain that $G \in \F_w$.

Therefore $\cos(\cdot), \sin(\cdot) \in \F_w$ holds consequently also for the function $e^{ir}=\cos r+i \sin r$ we have the equality
$$
\int \exp(i w^T x) d\mu(x)=\int \exp(i w^T x) d\nu(x).
$$
Since $w \in \R^D$ was chosen arbitrarily, this means that the characteristic functions of two measures coincide, and therefore $\mu=\nu$.
\end{proof}

It is possible to obtain an analogical result for RBF activation function. Moreover, we can also get more general result under stronger assumptions on considered probability measures. More precisely, if a given family of neurons satisfies UAP, then their generalization is also capable of identifying any probability measure with compact support. Complete analysis of both cases is presented in Supplementary Material (section 2).



\begin{figure}
\centering
\begin{tabular}{ccccccc}
original & mask & k-nn & mean & dropout & our & CE
\end{tabular}
\! \includegraphics[width=8cm]{./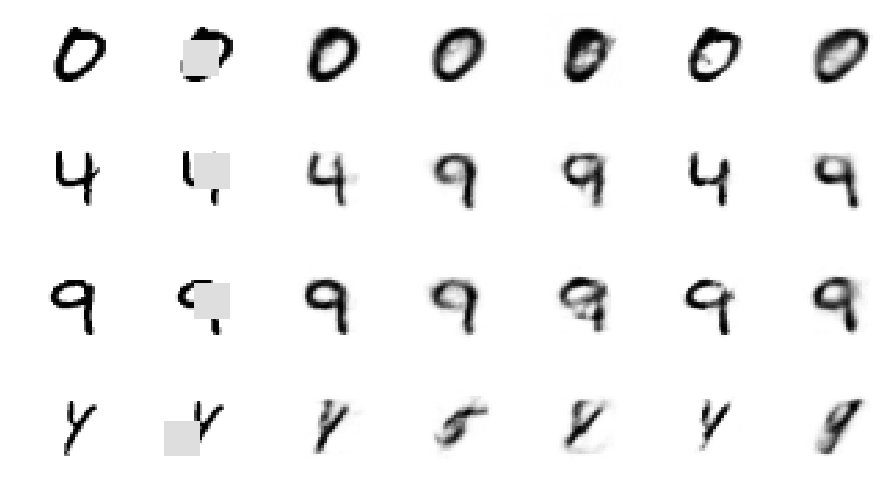} \\[-0.2cm]
\caption{Reconstructions of partially incomplete images using the autoencoder. From left: (1) original image, (2) image with missing pixels passed to autoencooder; the output produced by autoencoder when unknown pixels were initially filled by (3) k-nn imputation and (4) mean imputation; (5) the results obtained by autoencoder with dropout, (6) our method and (7) context encoder. All columns except the last one were obtained with loss function computed based on pixels from outside the mask (no fully observable data available in training phase). It can be noticed that our method gives much sharper images than the competitive methods.}
\label{fig:mnist}
\end{figure}

\section{Experiments}
\label{sec:experiments}

We evaluated our model on three types of architectures. First, as a proof of concept, we verified the proposed approach in the context of autoencoder (AE). Next we applied multilayer perceptron (MLP) to multiclass classification problem and finally we used shallow radial basis function network (RBFN) in binary classification. For a comparison we only considered methods with publicly available codes and thus many methods described in the related work section have not been taken into account. The code implementing the proposed method is available at \url{https://github.com/lstruski/Processing-of-missing-data-by-neural-networks}.

{\bf Autoencoder.} Autoencoder (AE) is usually used for generating compressed representation of data. However, in this experiment, we were interested in restoring corrupted images, where part of data was hidden.

As a data set, we used grayscale handwritten digits retrieved from MNIST database. For each image of the size $28 \times 28 = 784$ pixels, we removed a square patch of the 
size\footnote{In the case when the removed patch size was smaller, all considered methods performed very well and cannot be visibly distinguished.} $13 \times 13$. The location of the patch was uniformly sampled for each image. AE used in the experiments consists of 5 hidden layers with 256, 128, 64, 128, 256 neurons in subsequent layers. The first layer was parametrized by ReLU activation functions, while the remaining units used sigmoids\footnote{We also experimented with ReLU in remaining layers (except the last one), however the results we have obtained were less plausible.}.

As describe in Section \ref{se:introduction}, our model assumes that there is no complete data in training phase. Therefore, the loss function was computed based only on pixels from outside the mask.

As a baseline, we considered combination of analogical architecture with popular imputation techniques: 

{\em k-nn}: Missing features were replaced with mean values of those features computed from the $K$ nearest training samples (we used $K=5$). Neighborhood was measured using Euclidean distance in the subspace of observed features. 


{\em mean}: Missing features were replaced with mean values of those features computed for all (incomplete) training samples.


{\em dropout}: Input neurons with missing values were dropped\footnote{Values of the remaining neurons were divided by $1-dropout\ rate$}.

\begin{table*}[t]
\centering
\footnotesize
\caption{Mean square error of reconstruction on MNIST incomplete images (we report the errors calculated over the whole area, inside and outside the mask). Described errors are obtained for images with intensities scaled to $[0, 1]$. \label{tab:mnist}}
\begin{tabular}{lcccc|c}
\toprule
& \multicolumn{4}{c|}{only missing data} 
 & {complete data}\\ \cmidrule(r){2-6} 
& {\bf k-nn} & {\bf mean} & {\bf dropout} & {\bf our} & {\bf CE} \\ \midrule
Total error & $0.01189$ & $0.01727$ & $0.01379$ & ${\bf 0.01056}$ & $0.01326$\\
Error inside the mask & $0.00722$ & $0.00898$ & $0.00882$ & $0.00810$ & ${\bf 0.00710}$\\
Error outside the mask & $0.00468$ & $0.00829$ & $0.00498$ & ${\bf 0.00246}$ & $0.00617$
\end{tabular}
\end{table*}


Additionally, we used a type of context encoder ({\em CE}), where missing features were replaced with mean values, however in contrast to mean imputation, the complete data were used as an output of the network in training phase. This model was expected to perform better, because it used complete data in computing the network loss function.

Incomplete inputs and their reconstructions obtained with various approaches are presented in Figure \ref{fig:mnist} (more examples are included in Supplementary Material, section 3). It can be observed that our method gives sharper images then the competitive methods. In order to support the qualitative results, we calculated mean square error of reconstruction (see Table \ref{tab:mnist}). Quantitative results confirm that our method has lower error than imputation methods, both inside and outside the mask. Moreover, it overcomes CE in case of the whole area and the area outside the mask. In case of the area inside the mask, CE error is only slightly better than ours, however CE requires complete data in training.

{\bf Multilayer perceptron.} In this experiment, we considered a typical MLP architecture with 3 ReLU hidden layers. It was applied to multiclass classification problem on Epileptic Seizure Recognition data set (ESR) taken from \cite{andrzejak2001indications}. Each 178-dimensional vector (out of 11500 samples) is EEG recording of a given person for 1 second, categorized into one of 5 classes. To generate missing attributes, we randomly removed 25\%, 50\%, 75\% and 90\% of values.

\begin{table*}[t]
\centering
\footnotesize
\caption{Classification results on ESR data obtained using MLP (the results of CE are not bolded, because it had access to complete examples).\label{tab:resClass11}}
\begin{tabular}{ccccccc|c}
\toprule
 & \multicolumn{6}{c|}{only missing data} 
 & {complete data}\\ \cmidrule(r){2-8} 
{\bf \% of missing} & {\bf k-nn} & {\bf mice} & {\bf mean} & {\bf gmm} & {\bf dropout} & {\bf our} & {\bf CE}\\ \midrule
25\% & $0.773 $ & ${\bf 0.823}$ & $0.799$ & ${\bf 0.823} $ & $0.796$ & $0.815$ & $0.812$ \\
50\% & $0.773$ & $0.816$ & $0.703$ & $0.801$ & $0.780$ & ${\bf 0.817}$ & $0.813$ \\
75\% & $0.628$ & $0.786$ & $0.624$ & $0.748$ & $0.755$ & ${\bf 0.787}$ & $0.792$ \\
90\% & $0.615$ & $0.670$ & $0.596$ & $0.697$ & $0.749$ & ${\bf 0.760}$ & $ 0.771$ \\
\end{tabular}
\end{table*}

In addition to the imputation methods described in the previous experiment, we also used iterative filling of missing attributes using Multiple Imputation by Chained Equation ({\it mice}), where several imputations are drawing from the conditional distribution of data by Markov chain Monte Carlo techniques \cite{buuren2011micemice, azur2011multiple}. Moreover, we considered the mixture of Gaussians ({\it gmm}), where missing features were replaced with values sampled from GMM estimated from incomplete data using EM algorithm\footnote{Due to the high-dimensionality of MNIST data, mice was not able to construct imputations in previous experiment. Analogically, EM algorithm was not able to fit GMM because of singularity of covariance matrices.}.

We applied double 5-fold cross-validation procedure to report classification results and we tuned required hyper-parameters. 
The number of the mixture components for our method was selected in the inner cross-validation from the possible values $\{2,3,5\}$. Initial mixture of Gaussians was selected using classical GMM with diagonal matrices. The results were assessed using classical accuracy measure.

The results presented in Table \ref{tab:resClass11} show the advantage of our model over classical imputation methods, which give reasonable results only for low number of missing values. It is also slightly better than dropout, which is more robust to the number of absent attributes than typical imputations. It can be seen that our method gives comparable scores to CE, even though CE had access to complete training data. We also ran MLP on complete ESR data (with no missing attributes), which gave $0.836$ of accuracy. 

\begin{table}[t]
\centering
\footnotesize
\caption{Summary of data sets with internally absent attributes.}\label{tab:sum_data_1}
\begin{tabular}{lrrrr}
\toprule
        \textbf{Data set} & \textbf{\#Instances} & \textbf{\#Attributes} & \textbf{\#Missing} \\ \midrule
bands & 539 & 19 & 5.38\%  \\
kidney disease & 400 & 24 & 10.54\% \\
hepatitis & 155 & 19 & 5.67\%  \\
horse & 368 & 22 & 23.80\%  \\
mammographics & 961 & 5 & 3.37\%  \\
pima & 768 & 8 & 12.24\%  \\
winconsin & 699 & 9 & 0.25\%  \\
\end{tabular}
\end{table}

{\bf Radial basis function network.} RBFN can be considered as a minimal architecture implementing our model, which contains only one hidden layer. We used cross-entropy function applied on a softmax in the output layer. This network suits well for small low-dimensional data.

For the evaluation, we considered two-class data sets retrieved from UCI repository \cite{Asuncion+Newman:2007} with internally missing attributes, see Table \ref{tab:sum_data_1} (more data sets are included in Supplementary Materials, section 4). Since the classification is binary, we extended baseline with two additional SVM kernel models which work directly with incomplete data without performing any imputations:

{\em geom}: Its objective function is based on the geometric interpretation of the margin and aims to maximize the margin of each sample in its own relevant subspace \cite{chechik2008max}.

{\em karma}: This algorithm iteratively tunes kernel classifier under low-rank assumptions \cite{hazan2015classification}.

The above SVM methods were combined with RBF kernel function.

We applied analogical cross-validation procedure as before. The number of RBF units was selected in the inner cross-validation from the range $\{25,50,75,100\}$. Initial centers of RBFNs were randomly selected from training data while variances were samples from $N(0, 1)$. For SVM methods, the margin parameter $C$ and kernel radius $\gamma$ were selected from $\{2^k: k=-5,-3,\ldots,9\}$ for both parameters. For karma, additional parameter $\gamma_{karma}$ was selected from the set $\{1,2\}$.


The results, presented in Table \ref{tab:resClass1}, indicate that our model outperformed imputation techniques in almost all cases. It partially confirms that the use of raw incomplete data in neural networks is usually better approach than filling missing attributes before learning process. Moreover, it obtained more accurate results than modified kernel methods, which directly work on incomplete data. 

\begin{table*}[t]
\centering
\footnotesize
\caption{Classification results obtained using RBFN (the results of CE are not bolded, because it had access to complete examples).\label{tab:resClass1}}
\begin{tabular}{lcccccccc|c}
\toprule
& \multicolumn{8}{c|}{only missing data} 
 & {complete data}\\ \cmidrule(r){2-10} 
{\bf data} & {\bf karma} & {\bf geom} & {\bf k-nn} & {\bf mice} & {\bf mean} & {\bf gmm} & {\bf dropout} & {\bf our} & {\bf CE}\\ \midrule
bands & $0.580$ & $0.571$ & $0.520$ & $0.544$ & $0.545$ & $0.577$ & ${\bf 0.616}$ & $0.598$ & $0.621$\\
kidney & ${\bf 0.995}$ & $0.986$ & $0.992$ & $0.992$ & $0.985$ & $0.980$ & $0.983$ & $0.993$ & $0.996$\\
hepatitis & $0.665$ & $0.817$ & $0.825$ & $0.792$ & $0.825$ & $0.820$ & $0.780$ & ${\bf 0.846}$ & $0.843$\\
horse & $0.826$ & $0.822$ & $0.807$ & $0.820$ & $0.793$ & $0.818$ & $0.823$ & ${\bf 0.864}$ & $0.858$\\
mammogr. & $0.773$ & $0.815$ & $0.822$ & $0.825$ & $0.819$ & $0.803$ & $0.814$ & ${\bf 0.831}$ & $0.822$\\
pima & $0.768$ & $0.766$ & $0.767$ & ${\bf 0.769}$ & $0.760$ & $0.742$ & $0.754$ & $0.747$ & $0.743$\\
winconsin & $0.958$ & $0.958$ & $0.967$ & ${\bf 0.970}$ & $0.965$ & $0.957$ & $0.964$ & ${\bf 0.970}$ & $0.968$\\
\end{tabular}
\end{table*}

\section{Conclusion}
\label{sec:conclusion}

In this paper, we proposed a general approach for adapting neural networks to process incomplete data, which is able to train on data set containing only incomplete samples. Our strategy introduces input layer for processing missing data, which can be used for a wide range of networks and does not require their extensive modifications. Thanks to representing incomplete data with probability density function, it is possible to determine more generalized and accurate response (activation) of the neuron. We showed that this generalization is justified from a theoretical perspective. The experiments confirm its practical usefulness in various tasks and for diverse network architectures. In particular, it gives comparable results to the  methods, which require complete data in training.


\section*{Acknowledgement}

This work was partially supported by National Science Centre, Poland (grants no. 2016/21/D/ST6/00980, 2015/19/B/ST6/01819, 2015/19/D/ST6/01215, 2015/19/D/ST6/01472). We would like to thank the anonymous reviewers for their valuable comments on our paper.

\bibliographystyle{unsrt}
\bibliography{ref}

\begin{thebibliography}{10}

\bibitem{goodfellow2016deep}
Ian Goodfellow, Yoshua Bengio, and Aaron Courville.
\newblock {\em Deep learning}.
\newblock MIT press, 2016.

\bibitem{pathak2016context}
Deepak Pathak, Philipp Krahenbuhl, Jeff Donahue, Trevor Darrell, and Alexei~A
  Efros.
\newblock Context encoders: Feature learning by inpainting.
\newblock In {\em Proceedings of the IEEE Conference on Computer Vision and
  Pattern Recognition}, pages 2536--2544, 2016.

\bibitem{yang2017high}
Chao Yang, Xin Lu, Zhe Lin, Eli Shechtman, Oliver Wang, and Hao Li.
\newblock High-resolution image inpainting using multi-scale neural patch
  synthesis.
\newblock In {\em The IEEE Conference on Computer Vision and Pattern
  Recognition (CVPR)}, volume~1, page~3, 2017.

\bibitem{xie2012image}
Junyuan Xie, Linli Xu, and Enhong Chen.
\newblock Image denoising and inpainting with deep neural networks.
\newblock In {\em Advances in neural information processing systems}, pages
  341--349, 2012.

\bibitem{yeh2017semantic}
Raymond~A Yeh, Chen Chen, Teck~Yian Lim, Alexander~G Schwing, Mark
  Hasegawa-Johnson, and Minh~N Do.
\newblock Semantic image inpainting with deep generative models.
\newblock In {\em Proceedings of the IEEE Conference on Computer Vision and
  Pattern Recognition}, pages 5485--5493, 2017.

\bibitem{mcknight2007missing}
Patrick~E McKnight, Katherine~M McKnight, Souraya Sidani, and Aurelio~Jose
  Figueredo.
\newblock {\em Missing data: A gentle introduction}.
\newblock Guilford Press, 2007.

\bibitem{sharpe1995dealing}
Peter~K Sharpe and RJ~Solly.
\newblock Dealing with missing values in neural network-based diagnostic
  systems.
\newblock {\em Neural Computing \& Applications}, 3(2):73--77, 1995.

\bibitem{sovilj2016extreme}
Du{\v{s}}an Sovilj, Emil Eirola, Yoan Miche, Kaj-Mikael Bj{\"o}rk, Rui Nian,
  Anton Akusok, and Amaury Lendasse.
\newblock Extreme learning machine for missing data using multiple imputations.
\newblock {\em Neurocomputing}, 174:220--231, 2016.

\bibitem{batista2002study}
Gustavo~EAPA Batista, Maria~Carolina Monard, et~al.
\newblock A study of k-nearest neighbour as an imputation method.
\newblock {\em HIS}, 87(251-260):48, 2002.

\bibitem{buuren2011micemice}
Stef Buuren and Karin Groothuis-Oudshoorn.
\newblock mice: Multivariate imputation by chained equations in r.
\newblock {\em Journal of statistical software}, 45(3), 2011.

\bibitem{azur2011multiple}
Melissa~J Azur, Elizabeth~A Stuart, Constantine Frangakis, and Philip~J Leaf.
\newblock Multiple imputation by chained equations: what is it and how does it
  work?
\newblock {\em International journal of methods in psychiatric research},
  20(1):40--49, 2011.

\bibitem{yoon2018gain}
Jinsung Yoon, James Jordon, and Mihaela van~der Schaar.
\newblock Gain: Missing data imputation using generative adversarial nets.
\newblock pages 5689--5698, 2018.

\bibitem{gupta2016monotonic}
Maya Gupta, Andrew Cotter, Jan Pfeifer, Konstantin Voevodski, Kevin Canini,
  Alexander Mangylov, Wojciech Moczydlowski, and Alexander Van~Esbroeck.
\newblock Monotonic calibrated interpolated look-up tables.
\newblock {\em The Journal of Machine Learning Research}, 17(1):3790--3836,
  2016.

\bibitem{ghahramani1994supervised}
Zoubin Ghahramani and Michael~I Jordan.
\newblock Supervised learning from incomplete data via an {EM} approach.
\newblock In {\em Advances in Neural Information Processing Systems}, pages
  120--127. Citeseer, 1994.

\bibitem{tresp1994training}
Volker Tresp, Subutai Ahmad, and Ralph Neuneier.
\newblock Training neural networks with deficient data.
\newblock In {\em Advances in neural information processing systems}, pages
  128--135, 1994.

\bibitem{struski2016generalized}
Marek {\'S}mieja, {\L}ukasz Struski, and Jacek Tabor.
\newblock Generalized rbf kernel for incomplete data.
\newblock {\em arXiv preprint arXiv:1612.01480}, 2016.

\bibitem{williams2005incomplete}
David Williams, Xuejun Liao, Ya~Xue, and Lawrence Carin.
\newblock Incomplete-data classification using logistic regression.
\newblock In {\em Proceedings of the International Conference on Machine
  Learning}, pages 972--979. ACM, 2005.

\bibitem{smola2005kernel}
Alexander~J Smola, SVN Vishwanathan, and Thomas Hofmann.
\newblock Kernel methods for missing variables.
\newblock In {\em Proceedings of the International Conference on Artificial
  Intelligence and Statistics}. Citeseer, 2005.

\bibitem{williams2005analytical}
David Williams and Lawrence Carin.
\newblock Analytical kernel matrix completion with incomplete multi-view data.
\newblock In {\em Proceedings of the ICML Workshop on Learning With Multiple
  Views}, 2005.

\bibitem{shivaswamy2006second}
Pannagadatta~K Shivaswamy, Chiranjib Bhattacharyya, and Alexander~J Smola.
\newblock Second order cone programming approaches for handling missing and
  uncertain data.
\newblock {\em Journal of Machine Learning Research}, 7:1283--1314, 2006.

\bibitem{mesquita2016extreme}
Diego~PP Mesquita, Jo{\~a}o~PP Gomes, and Leonardo~R Rodrigues.
\newblock Extreme learning machines for datasets with missing values using the
  unscented transform.
\newblock In {\em Intelligent Systems (BRACIS), 2016 5th Brazilian Conference
  on}, pages 85--90. IEEE, 2016.

\bibitem{liao2007quadratically}
Xuejun Liao, Hui Li, and Lawrence Carin.
\newblock Quadratically gated mixture of experts for incomplete data
  classification.
\newblock In {\em Proceedings of the International Conference on Machine
  Learning}, pages 553--560. ACM, 2007.

\bibitem{dick2008learning}
Uwe Dick, Peter Haider, and Tobias Scheffer.
\newblock Learning from incomplete data with infinite imputations.
\newblock In {\em Proceedings of the International Conference on Machine
  Learning}, pages 232--239. ACM, 2008.

\bibitem{dekel2010learning}
Ofer Dekel, Ohad Shamir, and Lin Xiao.
\newblock Learning to classify with missing and corrupted features.
\newblock {\em Machine Learning}, 81(2):149--178, 2010.

\bibitem{globerson2006nightmare}
Amir Globerson and Sam Roweis.
\newblock Nightmare at test time: robust learning by feature deletion.
\newblock In {\em Proceedings of the International Conference on Machine
  Learning}, pages 353--360. ACM, 2006.

\bibitem{chechik2008max}
Gal Chechik, Geremy Heitz, Gal Elidan, Pieter Abbeel, and Daphne Koller.
\newblock Max-margin classification of data with absent features.
\newblock {\em Journal of Machine Learning Research}, 9:1--21, 2008.

\bibitem{xia2017adjusted}
Jing Xia, Shengyu Zhang, Guolong Cai, Li~Li, Qing Pan, Jing Yan, and Gangmin
  Ning.
\newblock Adjusted weight voting algorithm for random forests in handling
  missing values.
\newblock {\em Pattern Recognition}, 69:52--60, 2017.

\bibitem{pelckmans2005handling}
Kristiaan Pelckmans, Jos De~Brabanter, Johan~AK Suykens, and Bart De~Moor.
\newblock Handling missing values in support vector machine classifiers.
\newblock {\em Neural Networks}, 18(5):684--692, 2005.

\bibitem{hazan2015classification}
Elad Hazan, Roi Livni, and Yishay Mansour.
\newblock Classification with low rank and missing data.
\newblock In {\em Proceedings of The 32nd International Conference on Machine
  Learning}, pages 257--266, 2015.

\bibitem{goldberg2010transduction}
Andrew Goldberg, Ben Recht, Junming Xu, Robert Nowak, and Xiaojin Zhu.
\newblock Transduction with matrix completion: Three birds with one stone.
\newblock In {\em Advances in neural information processing systems}, pages
  757--765, 2010.

\bibitem{bengio1996recurrent}
Yoshua Bengio and Francois Gingras.
\newblock Recurrent neural networks for missing or asynchronous data.
\newblock In {\em Advances in neural information processing systems}, pages
  395--401, 1996.

\bibitem{nowicki2016novel}
Robert~K Nowicki, Rafal Scherer, and Leszek Rutkowski.
\newblock Novel rough neural network for classification with missing data.
\newblock In {\em Methods and Models in Automation and Robotics (MMAR), 2016
  21st International Conference on}, pages 820--825. IEEE, 2016.

\bibitem{goodfellow2013multi}
Ian Goodfellow, Mehdi Mirza, Aaron Courville, and Yoshua Bengio.
\newblock Multi-prediction deep boltzmann machines.
\newblock In {\em Advances in Neural Information Processing Systems}, pages
  548--556, 2013.

\bibitem{rao1973linear}
Calyampudi~Radhakrishna Rao, Calyampudi~Radhakrishna Rao, Mathematischer
  Statistiker, Calyampudi~Radhakrishna Rao, and Calyampudi~Radhakrishna Rao.
\newblock {\em Linear statistical inference and its applications}, volume~2.
\newblock Wiley New York, 1973.

\bibitem{andrzejak2001indications}
Ralph~G Andrzejak, Klaus Lehnertz, Florian Mormann, Christoph Rieke, Peter
  David, and Christian~E Elger.
\newblock Indications of nonlinear deterministic and finite-dimensional
  structures in time series of brain electrical activity: Dependence on
  recording region and brain state.
\newblock {\em Physical Review E}, 64(6):061907, 2001.

\bibitem{Asuncion+Newman:2007}
Arthur Asuncion and David~J. Newman.
\newblock {UCI Machine Learning Repository}, 2007.

\end{thebibliography}

\appendix

\section{Missing data representation}

In this section, we show how to regularize typical conditional probability density function. Next, we present complete formulas for a conditional density in the case of the mixture of Gaussians.

\subsection{Regularized conditional density}

Let us recall a definition of conditional density representing missing data points formulated in the paper. We assume that $F$ is a probability density function on data space $\R^D$. A missing data point $(x,J)$ can be represented by restricting $F$ to the affine subspace $S = \aff[x,J]$, which gives a conditional density $F_S: S \to \R$ given by:
\begin{equation} \label{eq:condDen}
F_S(x)=
\begin{cases}
\frac{1}{\int_S F(s) ds}F(x) \mbox{, for }x \in S,\\
0 \mbox{, otherwise,}
\end{cases}
\end{equation}

The natural choice for missing data density $F$ is to apply GMM. However, the straightforward application of GMM may lead to some practical problems with taking the conditional density \eqref{eq:condDen}. Thus, to provide better representation of missing data points we introduce additional regularization described in this section.

Let us observe that the formula \eqref{eq:condDen} is not well-defined in the case when the density function $F$ is identically zero on the affine space $S=\aff[x,J]$. In practice, the same problem appears numerically for the mixture of gaussians, because every component has exponentially fast decrease. In consequence, \eqref{eq:condDen} either gives no sense, or trivializes\footnote{One can show that the conditional density of a missing point $(x,J)$ sufficiently distant from the data reduces to only one gaussian, which center is nearest in the Mahalanobis distance to $\aff[x,J]$} (reduces to only one gaussian) for points sufficiently far from the main clusters. To some extent we can also explain this problem with the fact that the real density can have much slower decrease to infinity than gaussians, and therefore the estimation of conditional density based on gaussian mixture  becomes unreliable.

To overcome this problem, which occurs in the case of gaussian distributions, we introduce the  regularized $\gamma$-conditional densities, where $\gamma > 0$ is a regularization parameter. Intuitively, the regularization allows to control the influence of $F$ outside $S=\aff[x,J]$ on conditional density $F_S$. In consequence, the mixture components (in the case of GMM) could have higher impact on the final conditional density even if they are located far from $S$. The Figure \ref{fig:reg} illustrates the regularization effect for different values of $\gamma$. We are indebted to the classical idea behind the definition of conditional probability.

\begin{figure*}[t] 
\centering
\subfigure[Conditional density ($\gamma \to 0$) \label{fig:norma}]{\includegraphics[width=0.3\textwidth]{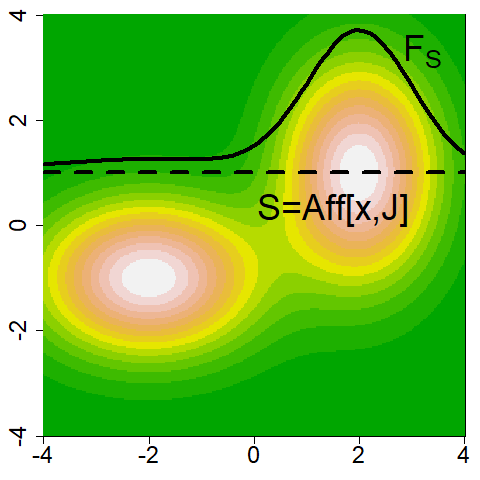}}
\subfigure[Intermediate case ($\gamma=1$)\label{fig:norma1}]{\includegraphics[width=0.3\textwidth]{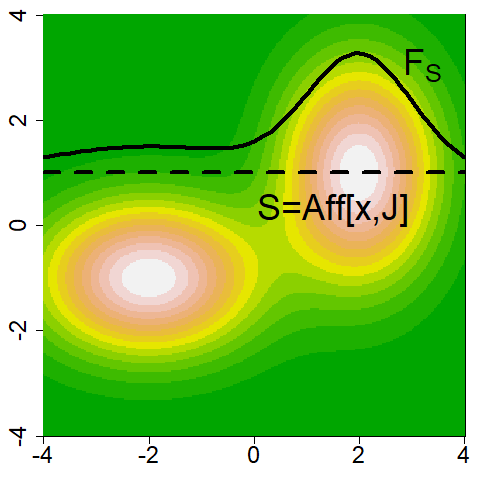}}
\subfigure[Marginal density ($\gamma \to \infty$) \label{fig:norma2}]{\includegraphics[width=0.3\textwidth]{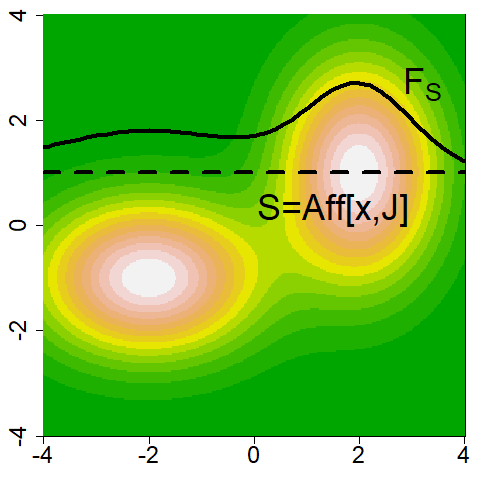}}
\caption{Illustration of probabilistic representation $F_S$ of missing data point $(*,-1) \in \R^2$ for different regularization parameters $\gamma$ when data density is given by the mixture of two Gaussians. }
\label{fig:reg}
\end{figure*}

Let $\gamma > 0$ be a regularization parameter. By regularized $\gamma$-restriction of
$F$ to the affine subspace $S$ of $\R^D$ we understand
$$
F^\gamma|_{S}(x)=
\begin{cases}
\int_{S_{\perp}^x} F(s) \cdot N(x,\gamma I_{S-x})(s)ds \mbox{, if } x \in S,\\
0 \mbox{ otherwise,}
\end{cases}
$$
where $S_{\perp}^x=\{w: (w-x) \perp (S-x)\}$ is the affine space consisting of all 
points which are at $x$ perpendicular to $S$, and $N(x,\gamma I_{S-x})$ is the degenerate normal density which has mean at $x$, is supported on $S$ and its covariance matrix is a rescaled identity (restricted to $S-x$).
Then the regularized $\gamma$-conditional density $F^\gamma_S$ is defined as the normalization of $F^\gamma|_S$: 
$$
F^\gamma_S=
\begin{cases}
\frac{1}{\int_{S} F^\gamma|_S(s) ds} F^\gamma|_S
\mbox{ for }s \in S, \\
0 \mbox{ otherwise.}
\end{cases}
$$

The regularized density $F^{\gamma}_S$ has the following properties:

\begin{enumerate}
\item $F^\gamma_S$ is well-defined degenerate density on $S$ for every $\gamma$,
\item $F^\gamma_S$ converges to the conditional density $F_S$ with $\gamma \to 0$, see Figure \ref{fig:norma}
\item $F^\gamma_S$ converges to the marginal density as $\gamma \to \infty$, see Figure \ref{fig:norma2}.
\end{enumerate}

One can easily see that the first point follows directly from the fact that $F$ is integrable.
Since for an arbitrary function $g$ we have $\int g(s) N(x,\gamma I)(s) ds \to f(x)$, as $\gamma \to 0$, we obtain that 
$$
\lim_{\gamma \to 0} F^\gamma|_S(x)=F(x) \mbox{ for }x \in S.
$$
Thus $F^\gamma|_S \to F|_S$, as $\gamma \to 0$. 
Analogously 
$$
\lim_{\gamma \to \infty} \int \frac{1}{N(0,\gamma I)(0)} g(s) N(x,\gamma I)(s) ds
$$
$$
=\lim_{\gamma \to \infty} \int g(s) \exp(-\frac{1}{2\gamma}\|x-s\|^2) ds
=\int g(s) ds,
$$
which implies that for large $\gamma$ the function $F^\gamma_S$ as a renormalization of $F^\gamma|_{S}$ at point $x$ converges to
$$
\int_{S_{\perp}^x} F(s) ds,
$$
which is exactly the value of marginal density at point $x$.


\subsection{Gaussian model for missing data density} \label{sec:gaus}

We consider the case of $F$ given by GMM and calculate analytical formula for the regularized $\gamma$-conditional density. To reduce the number of parameters and provide more reliable estimation in high dimensional space, we use diagonal covariance matrix for each mixture component. 

We will need the following notation: given a point $x \in \R^D$ and a set of indexes $K \subset \{1,\ldots,D\}$ by $x_K$ we denote the restriction of $x$ to the set of indexes $K$. The complementary set to $K$ is denoted by $K'$. Given $x,y$, by $[x_{K'},y_{K}]$ we denote a point in $\R^D$ which coordinates equal $x$ on $K'$ and $y$ on $K$. We use analogous notation for matrices.

One obtains the following exact formula for regularized restriction of gaussian density with diagonal covariance:

\begin{proposition}
Let $N(m,\Sigma)$ be non-degenerate normal density with a diagonal covariance $\Sigma = \diag(\sigma_1,\ldots,\sigma_D)$. We consider a missing data point $(x,J)$ represented by the affine subspace $S = \aff[x,J]$. Let $\gamma > 0$ be a regularization parameter. 

The $\gamma$-regularized restriction of $F$ to $S$ at point $s=[x_{J'},y_J] \in S$ equals:
$$
F^\gamma|_S(s) = C^\gamma_{m,\Sigma,S} N(m_S,\Sigma_S)(s),
$$
where
$$
m_S  = [x_{J'}, m_J], \Sigma_S = [0_{J'J'}, \Sigma_{JJ}],
$$
$$
\begin{array}{ll}
C^\gamma_{m,\Sigma,S} = & \displaystyle{\frac{1}{(2 \pi)^{(D-|J|)/2} \prod_{l \in J'} (\gamma + \sigma_l)^{1/2}}} \\[3ex]
& \cdot \exp(-\frac{1}{2} \sum_{l \in J'} \frac{1}{\gamma +\sigma_l} (m_l - x_l)^2).
\end{array}
$$
\end{proposition}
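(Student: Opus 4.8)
The plan is to unwind the definition of $F^\gamma|_S$ and exploit the diagonality of $\Sigma$ to reduce everything to a product of one-dimensional Gaussian integrals. First I would pin down the geometry: since $S=\aff[x,J]=x+\lin(e_J)$, the tangent space of $S$ is $\lin(e_J)$, so the perpendicular affine space through the evaluation point $s=[x_{J'},y_J]$ is $S_\perp^s=\{w:\,w_J=y_J\}$, parametrized by the free observed coordinates $w_{J'}\in\R^{|J'|}$. Hence, directly from the definition, $F^\gamma|_S(s)$ is the integral of $F=N(m,\Sigma)$ over $S_\perp^s$ against the spherical Gaussian of variance $\gamma$ centered at $s$, which in the $J'$ directions means centered at $x_{J'}$:
\[
F^\gamma|_S(s)=\int_{\R^{|J'|}} N(m,\Sigma)\big([w_{J'},y_J]\big)\, N(x_{J'},\gamma I)(w_{J'})\,dw_{J'}.
\]

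Next, because $\Sigma=\diag(\sigma_1,\ldots,\sigma_D)$ is diagonal, the density $N(m,\Sigma)$ factorizes coordinatewise and the bump $N(x_{J'},\gamma I)$ factorizes across $J'$. The factors indexed by the fixed missing coordinates $j\in J$ do not depend on the integration variable, so they pull out of the integral as $\prod_{j\in J}N(m_j,\sigma_j)(y_j)$, while the remaining integral splits into a product over $l\in J'$ of one-dimensional terms $\int_\R N(m_l,\sigma_l)(w)\,N(x_l,\gamma)(w)\,dw$.

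Then I would evaluate each one-dimensional factor with the Gaussian inner-product identity already used in the proof of the RBF theorem, namely $\int_\R N(a,\alpha)(t)N(b,\beta)(t)\,dt=N(a-b,\alpha+\beta)(0)$, which gives $\frac{1}{\sqrt{2\pi(\gamma+\sigma_l)}}\exp\!\big(-\tfrac{(m_l-x_l)^2}{2(\gamma+\sigma_l)}\big)$ for each $l\in J'$. Collecting these $|J'|=D-|J|$ factors reproduces exactly the constant $C^\gamma_{m,\Sigma,S}$, and recognizing the pulled-out factor $\prod_{j\in J}N(m_j,\sigma_j)(y_j)$ as the degenerate Gaussian $N(m_S,\Sigma_S)$, with $m_S=[x_{J'},m_J]$ and $\Sigma_S=[0_{J'J'},\Sigma_{JJ}]$, evaluated at $s$ completes the claimed identity $F^\gamma|_S(s)=C^\gamma_{m,\Sigma,S}\,N(m_S,\Sigma_S)(s)$.

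The computation itself is routine; the only genuine care goes into the geometric bookkeeping and the degenerate-Gaussian notation. In particular I would verify that the perpendicular bump is centered at the observed values $x_{J'}$ rather than at $m_{J'}$, and that $N(m_S,\Sigma_S)$, as a degenerate density supported on $S$, evaluates precisely to the $J$-marginal $N(m_J,\Sigma_{JJ})(y_J)$ at the point $s=[x_{J'},y_J]$. As a consistency check I would confirm that the resulting formula recovers the conditional density as $\gamma\to0$ and the marginal density as $\gamma\to\infty$, matching the three stated limiting properties of $F^\gamma_S$.
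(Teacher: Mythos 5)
The paper states this proposition without proof, so there is no in-text argument to compare against; judged on its own, your derivation is correct and is surely the intended one. You pin down $S_\perp^s=\{w:\,w_J=y_J\}$ correctly, use diagonality of $\Sigma$ to split the defining integral into the constant factor $\prod_{j\in J}N(m_j,\sigma_j)(y_j)=N(m_S,\Sigma_S)(s)$ times a product over $l\in J'$ of one-dimensional integrals, and evaluate each of those with the same identity $\int_\R N(a,\alpha)(t)N(b,\beta)(t)\,dt=N(a-b,\alpha+\beta)(0)$ that the paper uses for the RBF theorem; the $|J'|=D-|J|$ resulting factors assemble exactly into $C^\gamma_{m,\Sigma,S}$. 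One point you handle implicitly but which deserves to be stated: the paper's definition literally says the regularizing bump $N(x,\gamma I_{S-x})$ is ``supported on $S$,'' which is incompatible with integrating it over $S_\perp^x$; your reading --- a spherical Gaussian of variance $\gamma$ supported on the perpendicular space and hence centered at $x_{J'}$ in the observed coordinates --- is the only one consistent with the stated limiting behaviour ($\gamma\to 0$ recovering the restriction of $F$ to $S$, $\gamma\to\infty$ the marginal), and your final consistency check against those limits is exactly the right sanity test.
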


Finally, by using the above proposition (after normalization) we get the formula for the regularized conditional density in the case of the mixture of gaussians:

\begin{corollary} \label{cor:gmm}
Let $F$ be the mixture of nondegenerate gaussians 
$$
F=\sum_i p_i N(m_i,\Sigma_i),
$$
where all $\Sigma_i = \diag(\sigma^i_1,\ldots,\sigma^i_D)$ and let $S=\aff[x,J]$. 

Then
$$
F^\gamma_S=\sum_i r_i N(m^i_S,\Sigma^i_S),
$$
where 
$$
\begin{array}{l}
\displaystyle{m^i_S = [x_{J'},(m_i)_J], \Sigma^i_S = [0_{J'J'}, (\Sigma_i)_{JJ}]},\\[1ex]
\displaystyle{r_i = \frac{q_i}{\sum_j q_j}, q_i=C^\gamma_{m_i,\Sigma_i,S} \cdot p_i},\\
\end{array}
$$
$$
\begin{array}{ll}
C^\gamma_{m,\Sigma,S} = & \displaystyle{\frac{1}{(2 \pi)^{(D-|J|)/2} \prod_{l \in J'} (\gamma + \sigma_l)^{1/2}}}\\[3ex]
& \cdot \exp(-\frac{1}{2} \sum_{l \in J'} \frac{1}{\gamma +\sigma_l} (m_l - x_l^2). \\[2ex]
\end{array}
$$
\end{corollary}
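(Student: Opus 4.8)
The plan is to deduce the corollary from the preceding proposition using two elementary facts: that the regularized restriction $F \mapsto F^\gamma|_S$ is \emph{linear} in $F$, and that normalizing an unnormalized mixture of degenerate Gaussians (each already of unit mass on $S$) merely renormalizes the mixing weights. For the first fact, observe that for each $x \in S$ the value $F^\gamma|_S(x) = \int_{S_{\perp}^x} F(s)\, N(x,\gamma I_{S-x})(s)\, ds$ is an integral of $F$ against a kernel that does not itself depend on $F$; hence the map $F \mapsto F^\gamma|_S$ commutes with finite linear combinations. Writing $F=\sum_i p_i N(m_i,\Sigma_i)$ therefore gives
\begin{equation*}
F^\gamma|_S = \sum_i p_i \,\big(N(m_i,\Sigma_i)\big)^\gamma|_S .
\end{equation*}

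Next I would apply the proposition componentwise. Since every covariance $\Sigma_i$ is diagonal, the proposition applies verbatim to each summand and yields
\begin{equation*}
\big(N(m_i,\Sigma_i)\big)^\gamma|_S = C^\gamma_{m_i,\Sigma_i,S}\, N(m^i_S,\Sigma^i_S),
\end{equation*}
with $m^i_S = [x_{J'},(m_i)_J]$ and $\Sigma^i_S = [0_{J'J'},(\Sigma_i)_{JJ}]$ exactly as in the statement. Substituting this into the linear expansion above and setting $q_i = p_i\, C^\gamma_{m_i,\Sigma_i,S}$ produces the unnormalized restriction $F^\gamma|_S = \sum_i q_i\, N(m^i_S,\Sigma^i_S)$.

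The final step is normalization. By definition $F^\gamma_S = F^\gamma|_S \big/ \int_{S} F^\gamma|_S(s)\, ds$. Each $N(m^i_S,\Sigma^i_S)$ is a degenerate Gaussian supported on $S$, so as a density on the $|J|$-dimensional subspace $S$ it has unit total mass, giving $\int_S N(m^i_S,\Sigma^i_S)(s)\, ds = 1$ and hence $\int_S F^\gamma|_S(s)\, ds = \sum_i q_i$. Dividing through yields $F^\gamma_S = \sum_i r_i\, N(m^i_S,\Sigma^i_S)$ with $r_i = q_i/\sum_j q_j$, which is precisely the claimed formula.

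I expect the only genuinely delicate point to be the normalization bookkeeping: one must verify that each degenerate component $N(m^i_S,\Sigma^i_S)$ integrates to $1$ over the lower-dimensional affine subspace $S$, so that all the normalization is carried by the scalar weights rather than by the Gaussian shapes, and that the constants $C^\gamma_{m_i,\Sigma_i,S}$ handed down by the proposition are exactly the factors making the $q_i$ add up to the total restricted mass. Everything else is an immediate consequence of linearity, so no new analytic estimate is needed beyond what the proposition already supplies.
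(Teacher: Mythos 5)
Your proposal is correct and follows exactly the route the paper takes: the paper derives the corollary from the preceding proposition ``after normalization,'' which is precisely your combination of linearity of $F \mapsto F^\gamma|_S$, componentwise application of the proposition, and renormalization using the fact that each degenerate Gaussian $N(m^i_S,\Sigma^i_S)$ has unit mass on $S$. You have simply made explicit the bookkeeping that the paper leaves implicit.
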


\section{Theoretical analysis}

In this section, we continue a theoretical analysis of our model. First, we consider a special case of RBF neurons for arbitrary probability measures. Next, we restrict our attention to the measures with compact supports and show that the identification property holds for neurons satisfying UAP

\subsection{Identification property for RBF}

RBF function is given by
$$
\rbf_{m,\Sigma}(x)=N(m,\Sigma)(x),
$$
where $m$ is an arbitrary point and $\Sigma$ is positively defined symmetric matrix. In some cases one often restricts to either diagonal or rescaled identities $\Sigma=\alpha I$, where $\alpha>0$. In the last case we use the notation $\rbf_{m,\alpha}$ for $\rbf_{m,\alpha I}$.

\begin{theorem}
Let $\mu,\nu$ be probabilistic measures.
If
$$
\rbf_{m,\alpha}(\mu)=\rbf_{m,\alpha}(\nu) \mbox{ for every }m \in \R^D,\alpha>0,
$$
then $\nu=\mu$.
\end{theorem}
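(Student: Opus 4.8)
The plan is to recognize the generalized RBF response as a Gaussian convolution and then invoke injectivity of the Fourier transform. Writing out the definition and using the symmetry $N(m,\alpha I)(x)=g_\alpha(m-x)$ of the isotropic Gaussian, where $g_\alpha(y)=N(0,\alpha I)(y)$,
$$
\rbf_{m,\alpha}(\mu)=\int_{\R^D} N(m,\alpha I)(x)\,d\mu(x)=\int_{\R^D} g_\alpha(m-x)\,d\mu(x)=(g_\alpha * \mu)(m).
$$
Thus the hypothesis says precisely that the Gaussian-smoothed measures agree, i.e. $g_\alpha * \mu=g_\alpha * \nu$ as functions of $m$, for every $\alpha>0$ (in fact a single value of $\alpha$ will already suffice).

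First I would note that $g_\alpha\in L^1(\R^D)$ and $\mu,\nu$ are finite, so $g_\alpha*\mu$ and $g_\alpha*\nu$ are genuine integrable functions, and then take Fourier transforms. By the convolution theorem, $\widehat{g_\alpha * \mu}=\widehat{g_\alpha}\cdot\hat\mu$ and $\widehat{g_\alpha * \nu}=\widehat{g_\alpha}\cdot\hat\nu$, where $\hat\mu(\xi)=\int e^{-i\xi^T x}\,d\mu(x)$ is the characteristic function of $\mu$ and $\widehat{g_\alpha}(\xi)=\exp(-\tfrac{\alpha}{2}\|\xi\|^2)$. Equality of the two convolutions gives $\widehat{g_\alpha}(\xi)\,\hat\mu(\xi)=\widehat{g_\alpha}(\xi)\,\hat\nu(\xi)$ for all $\xi$; since $\widehat{g_\alpha}$ is strictly positive everywhere, dividing yields $\hat\mu=\hat\nu$, and by uniqueness of characteristic functions of finite measures we conclude $\mu=\nu$. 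This mirrors the characteristic-function conclusion already used in the ReLU proof above.

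The only real care needed — and the place where I expect the technical obstacle to lie — is the measure-theoretic bookkeeping: verifying that $g_\alpha*\mu$ is integrable so that the classical convolution theorem applies, and that a finite measure is determined by its characteristic function. An equivalent self-contained route, closer in spirit to the tent-function argument for ReLU, avoids the convolution theorem entirely: pair $g_\alpha * \mu=g_\alpha * \nu$ with an arbitrary bounded continuous $\phi$, use Fubini to write $\int(g_\alpha * \mu)\phi=\int(g_\alpha * \phi)\,d\mu$, and let $\alpha\to 0$; since $g_\alpha*\phi\to\phi$ uniformly on compacta while remaining uniformly bounded, dominated convergence gives $\int\phi\,d\mu=\int\phi\,d\nu$ for every such $\phi$, hence $\mu=\nu$. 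I would present the Fourier argument as the primary one and remark that a single value of $\alpha$ already forces the conclusion.
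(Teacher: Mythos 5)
Your proof is correct, but it takes a genuinely different route from the paper's. You identify $\rbf_{m,\alpha}(\mu)$ with the Gaussian mollification $(g_\alpha*\mu)(m)$ and then invoke the convolution theorem together with the fact that $\widehat{g_\alpha}(\xi)=\exp(-\tfrac{\alpha}{2}\|\xi\|^2)$ never vanishes, reducing the claim to injectivity of the characteristic function on finite measures; the measure-theoretic points you flag (integrability of $g_\alpha*\mu$, Fubini, uniqueness of characteristic functions) all go through, and your remark that a single $\alpha>0$ already suffices is a genuine strengthening of the stated hypothesis. The paper instead avoids Fourier analysis: it approximates $\mathds{1}_K*N(0,\alpha I)$, for a cube $K=a+[0,h]^D$, uniformly by Riemann sums of the Gaussians $N(a+\tfrac{i}{n}h,\alpha I)$ --- that is, by finite linear combinations of the very RBF units appearing in the hypothesis --- to conclude $\int \mathds{1}_K*N(0,\alpha I)\,d\mu=\int \mathds{1}_K*N(0,\alpha I)\,d\nu$, and then shrinks the bandwidth (while slightly enlarging the cube) so that dominated convergence yields $\mu(K)=\nu(K)$ for every cube. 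Your secondary argument, pairing $g_\alpha*\mu=g_\alpha*\nu$ against a bounded continuous test function and letting $\alpha\to 0$, is the closest in spirit to the paper's mollification step. What the Fourier route buys is brevity, the single-$\alpha$ refinement, and consistency with the characteristic-function ending of the ReLU theorem; what the paper's route buys is an elementary, self-contained argument that exhibits indicators of cubes as limits of elements of $\lin(\neu)$, mirroring the structure of the general UAP-based identification theorem that follows it in the supplementary material.
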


\begin{proof}
We will show that $\mu$ and $\nu$ coincide on every cube. Recall that 
$(\eta * f) (x)=\int \eta (y) \cdot f(x-y) d\lambda_N(x)$.

Let us first observe that for an arbitrary cube $K=a+[0,h]^D$
$$
\int \mathds{1}_{K}* N(0,\alpha)d\mu(x)=
\int \mathds{1}_{K}* N(0,\alpha) d\nu(x),
$$
where $h>0$ is arbitrary.
This follows from the obvious observation that 
$$
\frac{1}{n^K}\sum_{i \in \Z^D \cap [0,n]^D}N(a+\tfrac{i}{n}h,\alpha I)
$$ converges uniformly to $\mathds{1}_K* N(0,\alpha I)$, as $n$ goes to $\infty$. 

Since $\mathds{1}_{K+\frac{1}{n}[0,1]^n} * N(0,\frac{1}{n^4}I)$ converges pointwise to $\mathds{1}_K$, analogously as before by applying Lebesgue dominated convergence theorem we obtain the assertion. 
\end{proof}

\subsection{General identification property}

We begin with recalling the UAP (universal approximation property). We say that a family of neurons $\neu$ has UAP if for every compact set $K \subset \R^D$ and a continuous function $f:K \to \R$ the function $f$ can be arbitrarily close approximated with respect to supremum norm  by $\lin (\neu)$ (linear combinations of elements of $\neu$).

Our result shows that if a given family of neurons satisfies UAP, then their generalization allows to distinguish any two probability measures with compact support:
\begin{theorem} \label{thm:main}
Let $\mu,\nu$ be probabilistic measures with compact support.
Let $\neu$ be a family of functions having UAP.

If
\begin{equation} \label{eq:1}
n(\mu)=n(\nu) \mbox{ for every }n \in \neu,
\end{equation}
then $\nu=\mu$.
\end{theorem}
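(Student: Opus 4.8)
The plan is to use UAP to upgrade the assumed pointwise agreement \eqref{eq:1} from the family $\neu$ to \emph{all} continuous functions on a common compact set, and then to invoke the uniqueness of finite Borel measures determined by their integrals against continuous test functions. First I would fix a compact set $K \subset \R^D$ with $\supp \mu \cup \supp \nu \subseteq K$, which exists precisely because both measures have compact support. Since the generalized response $g \mapsto g(\mu) = \int g\,d\mu$ is linear in $g$, the hypothesis \eqref{eq:1} extends at once from $\neu$ to all finite linear combinations, giving $g(\mu) = g(\nu)$ for every $g \in \lin(\neu)$.

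Next I would fix an arbitrary continuous $f : K \to \R$. By UAP there is a sequence $g_k \in \lin(\neu)$ with $\sup_{x \in K}|g_k(x) - f(x)| \to 0$. Because $\mu$ and $\nu$ are probability measures supported in $K$, uniform convergence controls the integrals: $\left|\int g_k\,d\mu - \int f\,d\mu\right| \le \sup_{x \in K}|g_k(x)-f(x)| \to 0$, and the same holds for $\nu$. Combining these limits with the equalities $g_k(\mu) = g_k(\nu)$ yields $\int f\,d\mu = \int f\,d\nu$. As $f$ was an arbitrary continuous function, $\mu$ and $\nu$ assign the same integral to every element of $C(K)$.

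Finally, two finite Borel measures on the compact metric space $K$ that integrate every continuous function identically must coincide; this is the uniqueness part of the Riesz representation theorem, and one can make it elementary by approximating indicators of boxes by continuous functions (mirroring the argument used in Theorem~\ref{thm:main} of Section~\ref{sec:theory}). Hence $\mu = \nu$. The only delicate points are that the neurons in $\neu$, and hence their finite linear combinations, must be bounded on $K$ so that all integrals are finite—this follows from continuity of the neurons together with compactness of $K$—and the interchange of limit and integral, which is exactly where the compact-support assumption is essential, since it simultaneously bounds the total mass and confines the uniform approximation to the single set $K$. I expect this passage from agreement on $\lin(\neu)$ to agreement on all of $C(K)$ to be the conceptual crux, as it is the step that converts the purely algebraic UAP statement into the desired measure-theoretic identification.
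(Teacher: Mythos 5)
Your proposal is correct and follows essentially the same route as the paper's proof: extend \eqref{eq:1} to $\lin(\neu)$ by linearity, use UAP with uniform convergence on a compact set containing both supports to pass to all continuous test functions, and conclude by the uniqueness of measures determined by continuous functions (which the paper carries out explicitly by approximating indicators of balls and applying dominated convergence, exactly the elementary route you allude to). No substantive difference.
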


\begin{proof}
Since $\mu,\nu$ have compact support, we can take $R>1$ such that $\supp \mu, \supp \nu \subset B(0,R-1)$, where $B(a,r)$ denotes the closed ball centered at $a$ and with radius $r$. To prove that measures $\mu,\nu$ are equal it is obviously sufficient to prove that they coincide on each ball $B(a,r)$ with arbitrary $a \in B(0,R-1)$ and radius $r<1$.

Let $\phi_n$ be defined by
$$
\phi_n(x)=1-n \cdot d(x,B(a,r)) \mbox{ for }x \in \R^D,
$$
where $d(x,U)$ denotes the distance of point $x$ from the set $U$.
Observe that $\phi_n$ is a continuous function which is one on $B(a,r)$ an and zero on $\R^D \setminus B(a,r+1/n)$, and therefore $\phi_n$ is a uniformly bounded sequence of functions which converges pointwise to the characteristic funtion $\1_{B(a,r)}$ of the set $B(a,r)$. 

By the UAP property we choose $\psi_n \in \lin(\neu)$ such that 
$$
\supp_{x \in B(0,R)} |\phi_n(x)-\psi_n(x)| \leq 1/n.
$$
By the above also $\psi_n$ restricted to $B(0,R)$ is a uniformly bounded sequence of functions which converges pointwise to $\1_{B(a,r)}$. 
Since $\psi_n \in \neu$, by (\ref{eq:1}) we get
$$
\int \psi_n(x) d\mu(x)=\int \psi_n(x) d\nu(x).
$$
Now by the Lebesgue dominated convergence theorem
we trivially get
$$
\begin{array}{l}
\displaystyle{\int \psi_n(x) d\mu(x)=\int_{B(0,R)} \psi_n(x) d\mu(x) \to \mu(B(a,r)),} \\[1ex]
\displaystyle{\int \psi_n(x) d\nu(x)=\int_{B(0,R)} \psi_n(x) d\nu(x) \to \nu(B(a,r)),}
\end{array}
$$
which makes the proof complete.
\end{proof}

\begin{figure}
\centering
\begin{tabular}{ccccccc}
original & mask & k-nn & mean & dropout & our & CE
\end{tabular}
 \! \includegraphics[width=10cm]{./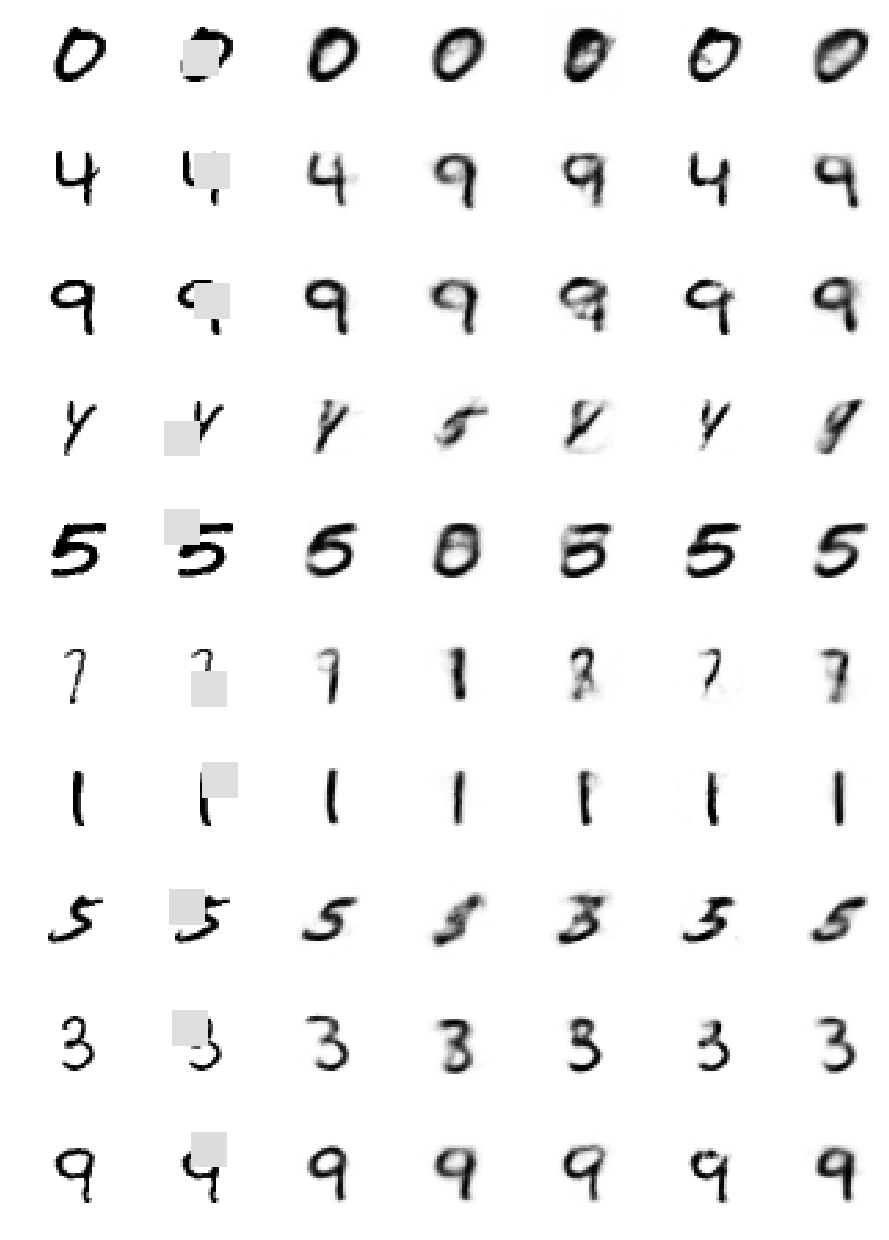} \\[-0.2cm]
\caption{More reconstructions of partially incomplete images using the autoencoder. From left: (1) original image, (2) image with missing pixels passed to autoencooder; the output produced by autoencoder when absent pixels were initially filled by (3) k-nn imputation and (4) mean imputation; (5) the results obtained by autoencoder with (5) dropout, (6) our method and (7) context encoder. All columns except the last one were obtained with loss function computed based on pixels from outside the mask (no fully observable data available in training phase). It can be noticed that our method gives much sharper images then the competitive methods.}
\label{fig:mnist}
\end{figure}

\begin{table}[t]
\centering
\footnotesize
\caption{Summary of data sets, where 50\% of values were removed randomly.}\label{tab:sum_data_2}
\begin{tabular}{lrr}
\toprule
\textbf{Data set} & \textbf{\#Instances} & \textbf{\#Attributes} \\ \midrule
australian & 690 & 14  \\
bank & 1372 & 4 \\
breast cancer & 699 & 8  \\
crashes & 540 & 20 \\
diabetes & 768 & 8  \\
fourclass & 862 & 2  \\
heart & 270 & 13  \\
liver disorders & 345 & 6  \\
\end{tabular}
\end{table}

\begin{table*}[t]
\centering
\scriptsize
\caption{Classification results measured by accuracy on UCI data sets with 50\% of removed attributes.\label{tab:resClass2} }
\begin{tabular}{lcccccc|c}
\toprule
{\bf data} & {\bf karma} & {\bf geom} & {\bf k-nn} & {\bf mice} & {\bf mean}  & {\bf dropout} & {\bf our}\\ \midrule
australian & ${\bf 0.833}$ & $0.802$ & $0.820$ & $0.826$ & $0.808$  & $0.812$ & ${\bf 0.833}$\\
bank & ${\bf 0.799}$ & $0.740$ & $0.763$ & $0.793$ & $0.788$ & $0.722$ & $0.795$\\
breast cancer & $0.938$ & $0.874$ & $0.902$ & $0.942$ & $0.938$ &  $0.911$ & ${\bf 0.951}$\\
crashes & ${\bf 0.920}$ & $0.914$ & $0.898$ & $0.894$ & $0.892$ &  $0.900$ & ${\bf 0.920}$\\
diabetes & $0.695$ & $0.644$ & $0.673$ & ${\bf 0.708}$ & $0.699$ &  $0.675$ & $0.690$\\
fourclass & ${\bf 0.808}$ & $0.653$ & $0.766$ & $0.776$ & $0.766$ &  $0.731$ & $0.737$\\
heart & $0.755$ & $0.738$ & $0.725$ & $0.751$ & $0.725$ &  $0.722$ & ${\bf 0.770}$\\
liver disorders & $0.530$ & $0.591$ & $0.565$ & $0.576$ & $0.562$  & $0.571$ & ${\bf 0.608}$\\
\end{tabular}
\end{table*}

\section{Reconstruction of incomplete MNIST images}

Due to the limited space in the paper, we could only present 4 sample images from MNIST experiment. In Figure \ref{fig:mnist}, we present more examples from this experiment.

\section{Additional RBFN experiment}

In addition to data sets reported in the paper, we also ran RBFN on 8 examples retrieved from UCI repository, see Table \ref{tab:sum_data_2}. These are complete data sets (with no missing attributes). To generate missing samples, we randomly removed 50\% of values. 

The results presented in Table \ref{tab:resClass2} confirm the effects reported in the paper. Our method outperformed imputation techniques in almost all case and was slightly better than karma algorithm.

\section{Computational complexity}

We analyze the computational complexity of applying a layer for missing data processing with $k$ Gaussians for modeling missing data density. Given an incomplete data point $S = \aff[x,J]$, where $x \in \R^D$ and $J\subset \{1,\ldots,N\}$, the cost of calculation of regularized (degenerate) density $F_S^\gamma$ is $O(k |J'|)$, where $J' = \{1,\ldots,N\} \setminus J$ (see Corollary 1.1. in supplementary material). Computation of a generalized ReLU activation (Theorem 3.1) takes $O(kD + k|J|)$. If we have $t$ neurons in the first layer, then a total cost of applying our layer is $O(k|J'| + tk(D+|J|))$. 


In contrast, for a complete data point we need to compute $t$ ReLU activations, which is $O(tD)$. In consequence, generalized activations can be about $2k$ times slower than working on complete data.

\section{Learning missing data density}

\begin{figure*}[t] 
\centering
\subfigure[Reference classification \label{fig:1}]{\includegraphics[width=0.3\textwidth]{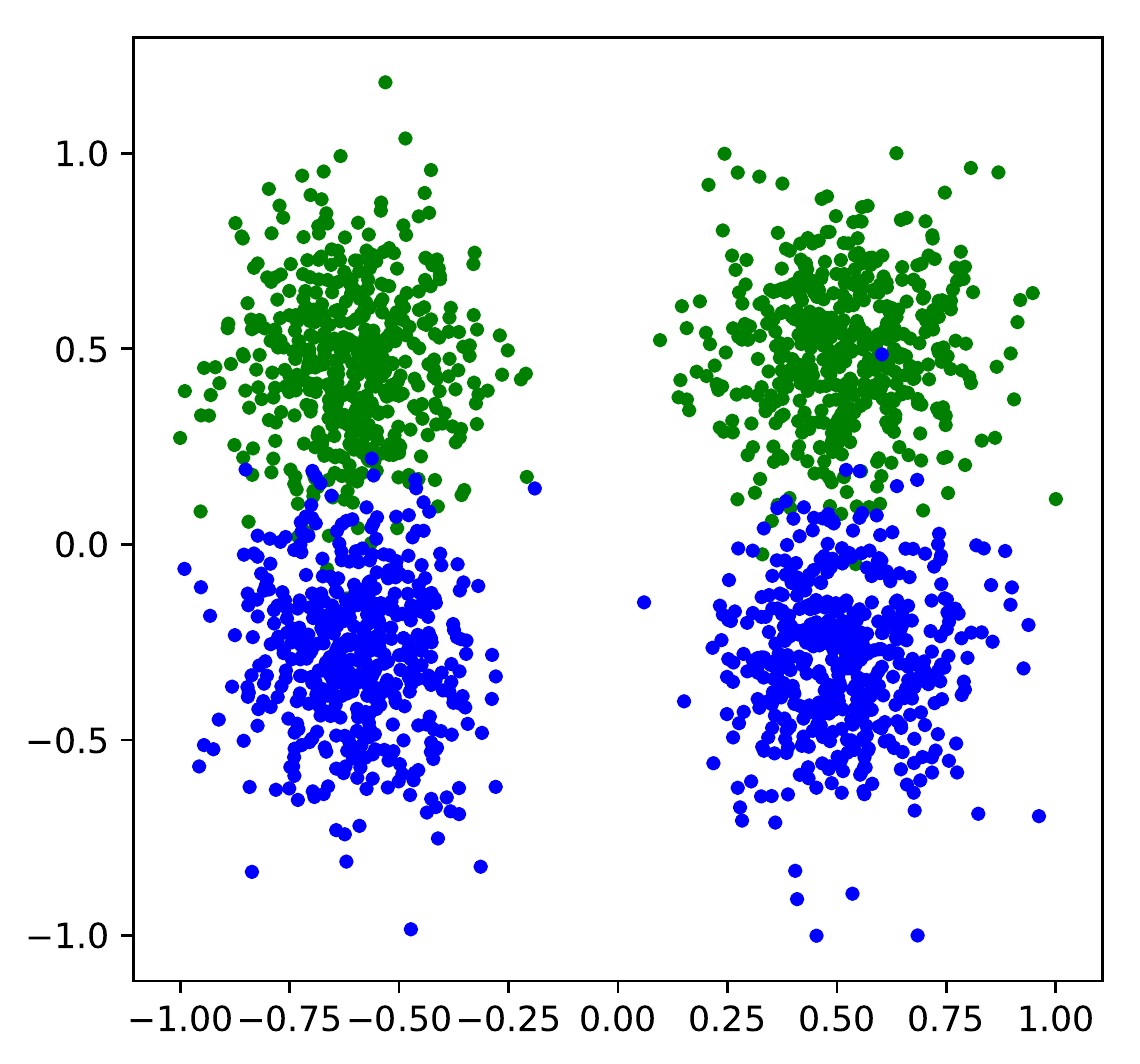}}
\subfigure[Initial Gaussians\label{fig:2}]{\includegraphics[width=0.3\textwidth]{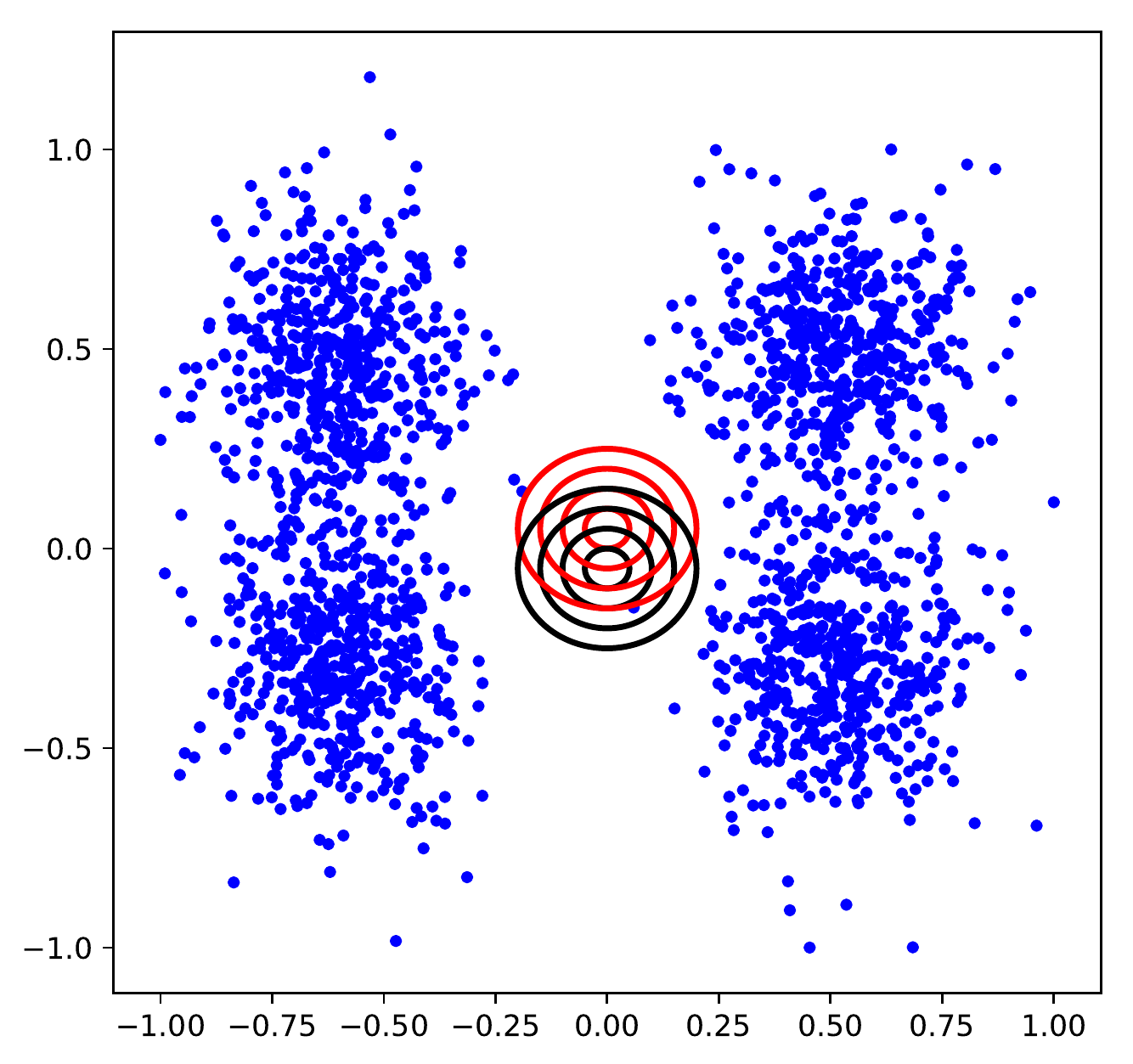}}
\subfigure[Final Gaussians and resulted classification \label{fig:3}]{\includegraphics[width=0.3\textwidth]{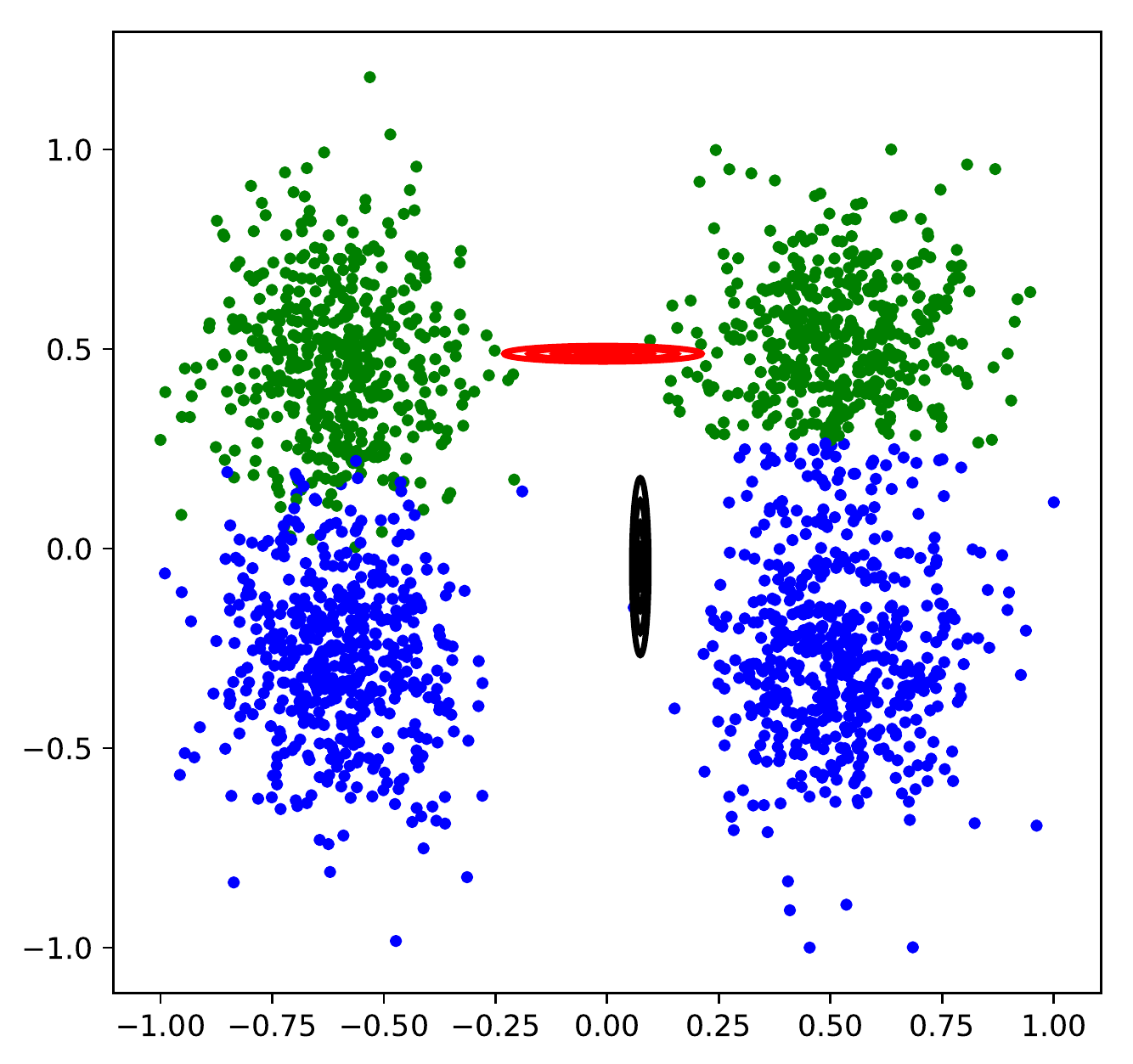}}
\caption{Toy example of learning missing data density by the network.}
\label{fig:laern}
\end{figure*}

To run our model we need to define initial mixture of Gaussians. This distribution is passed to the network and its parameters are tuned jointly with remaining network weights to minimize the overall cost of the network.

We illustrate this behavior on the following toy example. We generated a data set from the mixture of four Gaussians; two of them were labeled as class 1 (green) while the remaining two were labeled as class 2 (blue), see Figure \ref{fig:1}. We removed one of the attributes from randomly selected data points $x = (x_1,x_2)$ with $x_1 < 0$. In other words, we generated missing samples only from two Gaussian on the left. Figure \ref{fig:2} shows initial GMM passed to the network. As can be seen this GMM matches neither a data density nor a density of missing samples. After training, we get a GMM, where its first component estimates a density of class 1, while the second component matches class 2, see Figure \ref{fig:3}. In consequence, learning missing data density by the network helped to perform better classification than estimating GMM directly by EM algorithm.

\end{document}